\DeclareMathOperator*{\argmax}{argmax}
\DeclareMathOperator*{\argmin}{argmin}
\newcommand{\etal}{\textit{et al. }}
\newtheorem{theorem}{Theorem}
\definecolor{dark_blue}{HTML}{0000A0}
\definecolor{dark_red}{HTML}{8b0000}
\newcommand{\RNum}[1]{\uppercase\expandafter{\romannumeral #1\relax}}
\journal{Information Fusion}
\begin{document}
\renewcommand*{\figureautorefname}{Fig.}
\captionsetup[figure]{name={Fig.}}

\begin{frontmatter}

\title{Ensemble diverse hypotheses and knowledge distillation for unsupervised cross-subject adaptation}

\author[ShenzhenKeyLab,GuangdongKeyLab,ubcmech]{Kuangen Zhang}
\ead{kuangen.zhang@alumni.ubc.ca}

\author[ubcmech]{Jiahong Chen}
\ead{jiahong.chen@ieee.org}

\author[ubcee]{Jing Wang}
\ead{jing@ece.ubc.ca} 

\author[ShenzhenKeyLab,GuangdongKeyLab]{Xinxing Chen}
\ead{chenxx@sustech.edu.cn}

\author[ShenzhenKeyLab,GuangdongKeyLab]{\\Yuquan Leng}
\ead{lengyq@sustech.edu.cn}

\author[ubcmech]{Clarence W. de Silva}
\ead{desilva@mech.ubc.ca}

\author[ShenzhenKeyLab,GuangdongKeyLab]{Chenglong 
Fu\corref{mycorrespondingauthor}}
\ead{fucl@sustech.edu.cn}

\address[ShenzhenKeyLab]{Shenzhen Key Laboratory of Biomimetic Robotics and Intelligent Systems, Department of Mechanical and Energy Engineering, Southern University of Science and Technology, Shenzhen, 518055, China}
\address[GuangdongKeyLab]{Guangdong Provincial Key Laboratory of Human-Augmentation and Rehabilitation Robotics in Universities, Southern University of Science and Technology, Shenzhen, 518055, China}
\address[ubcmech]{Department of Mechanical Engineering, The University of British Columbia, Vancouver, BC, Canada}
\address[ubcee]{Department of Electrical and Computer Engineering, The University of British Columbia, Vancouver, BC, Canada}

\cortext[mycorrespondingauthor]{Corresponding author}
\cortext[code]{Code and data: \url{https://github.com/KuangenZhang/EDH}}

\begin{abstract}
\textcolor{black}{
Recognizing human locomotion intent and activities is important for controlling the wearable robots while walking in complex environments. However, human-robot interface signals are usually user-dependent, which causes that the classifier trained on source subjects performs poorly on new subjects. To address this issue, this paper designs the ensemble diverse hypotheses and knowledge distillation (EDHKD) method to realize unsupervised cross-subject adaptation. EDH mitigates the divergence between labeled data of source subjects and unlabeled data of target subjects to accurately classify the locomotion modes of target subjects without labeling data. Compared to previous domain adaptation methods based on the single learner, which may only learn a subset of features from input signals, EDH can learn diverse features by incorporating multiple diverse feature generators and thus increases the accuracy and decreases the variance of classifying target data, but it sacrifices the efficiency. To solve this problem, EDHKD (student) distills the knowledge from the EDH (teacher) to a single network to remain efficient and accurate. The performance of the EDHKD is theoretically proved and experimentally validated on a 2D moon dataset and two public human locomotion datasets. Experimental results show that the EDHKD outperforms all other methods. The EDHKD can classify target data with 96.9\%, 94.4\%, and 97.4\% average accuracy on the above three datasets with a short computing time (1 ms). Compared to a benchmark (BM) method, the EDHKD increases 1.3\% and 7.1\% average accuracy for classifying the locomotion modes of target subjects. The EDHKD also stabilizes the learning curves. Therefore, the EDHKD is significant for increasing the generalization ability and efficiency of the human intent prediction and human activity recognition system, which will improve human-robot interactions.} 
\end{abstract}




\begin{keyword}
Unsupervised cross-subject adaptation; ensemble learning; knowledge distillation; human intent prediction; human activity recognition; wearable robots.
\end{keyword}
\end{frontmatter}

\section{Introduction}
Wearable robots, including powered prostheses \citep{clites_proprioception_2018, azocar_design_2020}, exoskeletons \citep{zhang_human---loop_2017, ding_human---loop_2018}, and supernumerary robotic limbs \citep{hao_supernumerary_2020}, are expected to assist humans to regain mobility, improve energy economy, and enhance load carrying. The potential of wearable robots looks promising, but challenges remain in developing an adaptive and intelligent controller to control a wearable robot. A typical controller of a wearable robot has a hierarchical architecture, including high-level, middle-level, and low-level controllers \citep{tucker_control_2015}. The high-level controller estimates the desired locomotion mode, which is utilized to trigger the switching of the control parameters in the middle-level controller. The control parameters may include joint impedance parameters \citep{sup_design_2008}, swing trajectories \citep{mendez_powered_2020}, and the assistive force curve \citep{ding_human---loop_2018}. The low-level controller then activates joint torque control or position control based on the control parameters from the middle-level controller. The control parameters of different locomotion modes can be optimized and preprogrammed for different subjects \citep{zhang_human---loop_2017, ding_human---loop_2018}. The remaining key problem is how and when to properly switch between different locomotion modes (e.g., standing, walking, running, and climbing stairs). To switch the locomotion modes smoothly, wearable robots should first predict the wearer's intent in a timely manner \citep{xu_real-time_2018}.

Human intent is the mental activity that indicates the future actions of humans and is difficult to predict. Human-robot interface signals, including the electromyograph (EMG) signals \citep{clites_proprioception_2018, hu_fusion_2018} and the body motion as estimated by an inertial measurement unit (IMU) \citep{xu_real-time_2018, chen2021probability}, are usually preferred for decoding the human intent, as considered in previous research. Many classification methods have been proposed to classify the measured human-robot signals and estimate the locomotion mode of humans through that information  \citep{xu_real-time_2018, hu_fusion_2018,  hu_deep_2019}. Existing classification methods can achieve high classification accuracy on a given dataset \citep{hu_fusion_2018} but may provide an inferior performance on a new dataset, particularly if there is a distribution discrepancy between the two datasets. The distribution discrepancy, however, is common for human-robot interface signals. 
The skin, muscles, and locomotion patterns of different subjects are different, which makes the human-robot interface signals user-dependent. Even for the same subject, the interface signals may still vary depending on the sensor position and the subject's physical state (e.g. limb fatigue and skin conductivity). To accurately estimate the locomotion intent of a human, the existing methods have to collect and label a large number of signals and need to train a new classifier for each new subject \citep{hu_deep_2019}, which is infeasible.

One solution to the user-dependency problem is to adopt user-independent sensors, such as vision sensors. Vision sensors can perceive the environment ahead of time and thus predict the locomotion mode of humans at reasonable accuracy \citep{krausz_depth_2015, massalin_user-independent_2018, zhang_environmental_2019, zhong_environmental_2020}. Our previous research \citep{zhang_subvision_2021} has 
demonstrated that a vision system trained with the dataset of one healthy subject can be directly applied to accurately adjust the control modes of a powered prosthesis and assist different transfemoral amputees to walk on complex terrains without the need for training. These results have validated that a vision system is typically user-independent. Nevertheless, vision sensors may not recognize the state transitions induced by a human who wears the device, such as transitioning between standing and walking. Therefore, the human-robot interface signals still seem necessary to fully decode the human intent.

There is always a trade-off in the sensor choice: accurately decoding human intent from interface signals is important but the problem of user-dependency cannot be avoided. How can we decrease the user dependency of the interface signals? An important cause of user dependency is the distribution discrepancy of the interface signals from different subjects. If the distribution discrepancy can be decreased, the user dependency can also be decreased. A common method for resolving this problem is domain adaption \citep{ganin_domain-adversarial_2016, saito_maximum_2018, chen_mutual_2021}. In this method, there are two types of data: source-domain data and target-domain data. The source-domain data are labeled and the target-domain data are unlabeled, and there is a discrepancy between the two domains. The unsupervised domain adaptation methods train the classifier using the labeled source-domain data and unlabeled target-domain data. After training, the trained classifier can be used to classify the target-domain data with acceptable accuracy. Two typical domain adaptation methods are domain-adversarial neural networks (DANN) \citep{ganin_domain-adversarial_2016} and maximum classification discrepancy (MCD) \citep{saito_maximum_2018}. DANN was proposed by 
Ganin \etal and contains a feature generator, a domain classifier, and a label classifier. The domain classifier and the feature generator are adversarial. The domain classifier aims to distinguish between the features extracted from the source-domain data and those from the target-domain data. The feature generator seeks to extract similar hidden features from the source domain and the target domain data, to confuse the domain classifier until the domain classifier is unable to distinguish these features. Also, the feature generator requires to extract important features that can be classified accurately by the label classifier. DANN works well but it does not consider the classes of features. The discrepancy between source and target data may not be consistent in different classes, and thus sometimes the global adaptation may fail. To address this issue, Saito \etal proposed MCD \citep{saito_maximum_2018}, which consists of one feature generator and two classifiers. The two classifiers and the feature generator are adversarial. The classifiers are trained to maximize the discrepancy between them while the feature generator is trained to minimize the classifier discrepancy. After training, MCD is known to outperform DANN because it aligns the feature of two domains in each class. 

Existing domain adaptation methods are typically utilized to classify images. Few domain adaptation studies have focused on classifying human-robot interface signals and predicting the locomotion intent of humans. In our previous study, an unsupervised cross-subject adaptation method similar to MCD was proposed \citep{zhang_unsupervised_2020}. After training the designed convolutional neural network by the labeled data from the source subjects and the unlabeled data from the target subjects, a convolutional neural network was able to predict the locomotion intent of the target subject. Our past research validated the effectiveness of the unsupervised domain adaptation method in predicting the locomotion intent of humans, for the first time. Although the results in that research look promising, some limitations and challenges remain. 
\textcolor{black}{
First, the single feature generator of MCD may only learn a subset of the features from the input data when the input data are multi-view and contain multiple hidden features \citep{allen-zhu_towards_2021, zhao_multi-view_2017}, which is theoretically demonstrated in a thought experiment of this paper.
Ensemble learning \citep{bolon-canedo_ensembles_2019, ali_smart_2020} is able to address the mentioned issue by extracting diverse features and increasing the generalization ability, but with the sacrifice of efficiency. However, efficiency is also critical for wearable robots. Is it possible to learn diverse features and remain efficient simultaneously?} 
Moreover, a labeled validation set of the target subject was required to evaluate the performance of the neural network in the training process and determine the time to stop training. Will the network overfit the dataset of the source subjects if there is no labeled validation set of the target subject?

\textcolor{black}{In order to deal with these issues, the present paper develops a new unsupervised cross-subject adaptation method, which is called ensemble diverse hypotheses and knowledge distillation (EDHKD). The proposed EDH includes multiple diverse hypotheses and each hypothesis consists of a feature generator and a classifier, which is inspired by ensemble learning \citep{zhou_ensemble_2021}, whose principle is based on the use of two intelligent agents rather than one. 
We suppose that the EDH can learn diverse features by maximizing the discrepancy of different feature generators. Then EDHKD, which is the student of EDH and only includes one feature generator and one classifier, can learn the knowledge from EDH by learning the pseudo labels of EDH. Hence, the EDHKD can increase the generalization ability and remain efficient simultaneously. Moreover, the present paper also solves the overfitting problem by designing a lightweight backbone neural network. The overview of the proposed EDHKD is shown in \autoref{fig:1_cross_subject_overview}. The key contributions of the present paper are the following:
}
\begin{enumerate}
    \item \textcolor{black}{Theoretically explaining why the single feature generator of MCD is prone to learn a subset of features from multi-view data and how the ensemble learners and knowledge distillation learn the full set of the features.}
    \item \textcolor{black}{Combining diverse hypotheses, optimizing the loss functions, and distilling the knowledge to accurately and efficiently predict the locomotion modes of the target subject whose signals are not labeled.}
    \item Optimizing the network structure to avoid overfitting without requiring any labeled data of the target subjects.
    \item Achieving state-of-the-art accuracy levels ( 94.4\% and 97.4\%) on two public datasets for classifying the locomotion intent and activities of the target subject.
\end{enumerate}

\begin{figure*}[h!]
    \centering
    \includegraphics[width=\textwidth]{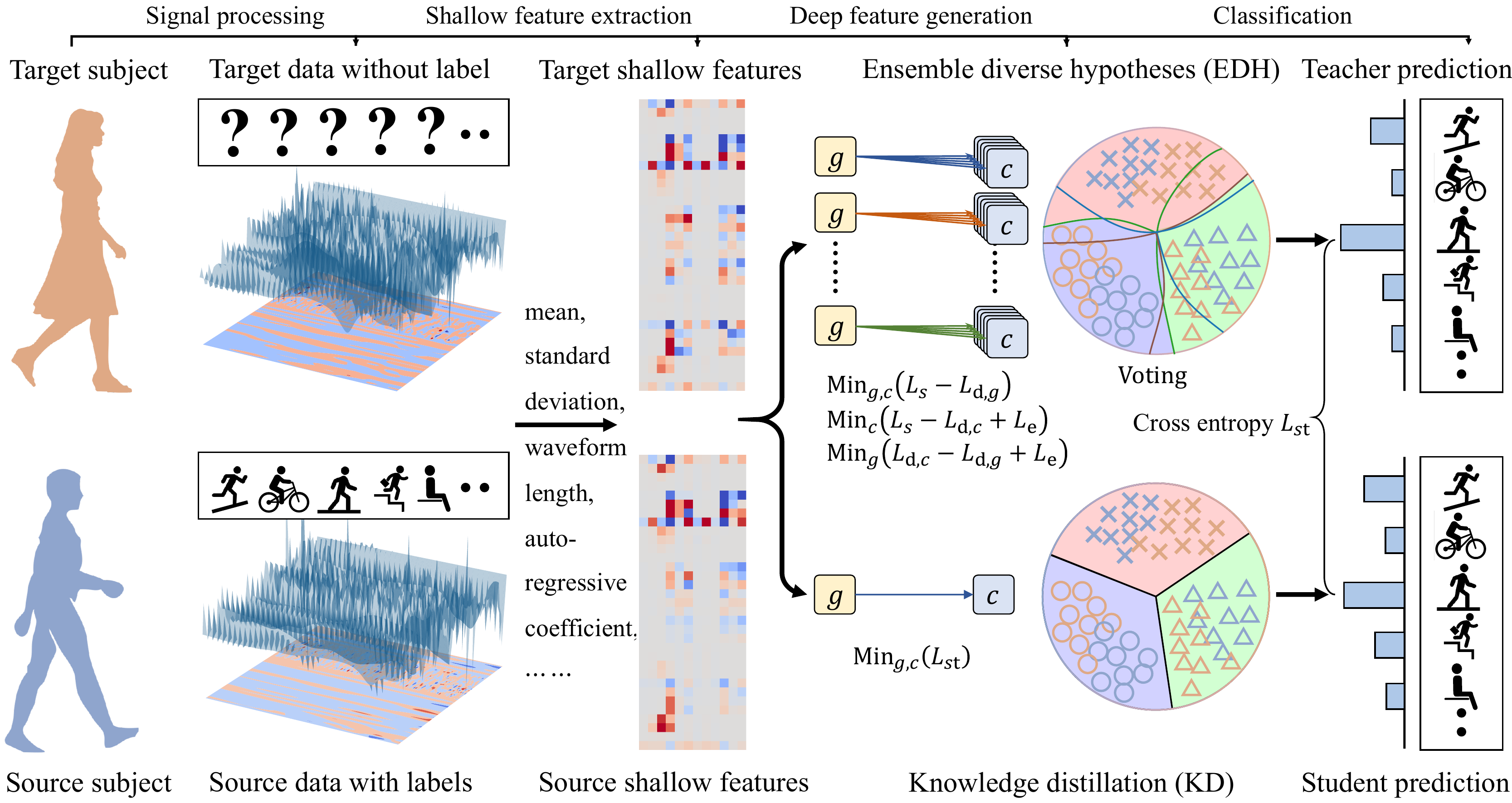}
    \caption{Overview of the ensemble diverse hypotheses and knowledge distillation (EDHKD). Shallow features, such as the mean and the standard deviation, are first extracted from the original human-robot interface signals. The target features are first recognized by the teacher network called ensemble diverse hypotheses (EDH). EDH is trained to minimize the source error ($L_s$), maximize the feature discrepancy ($L_{d,g}$), minimize the upper bound of classifier discrepancy ($L_{d,c}$), and minimize the entropy of the output predictions ($L_e$). After training, 
    the teach predictions of EDH, which can be regarded as pseudo labels, are utilized to train the student network (EDHKD). Both EDH and EDHKD can accurately predict the locomotion modes of target subjects but the latter is much more efficient.}
    \label{fig:1_cross_subject_overview} 
\end{figure*}

\section{Materials and Methods}
\label{sec:Cross-subjectMethods}

\subsection{\textcolor{black}{Thought experiment of the single learner, ensemble learners, and knowledge distillation}}
\label{subsec:thought_experiment}
Before introducing the details of the proposed EDHKD on classifying human-robot interface signals, the present section constructs a thought experiment to theoretically demonstrate the limitations of the single learner adopted in MCD and how the ensemble learners and knowledge distillation address the limitations. As introduced in \citep{allen-zhu_towards_2021}, the real data are usually ``multi-view", which indicates that the data may contain multiple hidden features that can be utilized to classify the input data. \autoref{theorem:single_learner} displays that the single learner may only learn a subset of the hidden features from the multi-view data, which degrades the generalization ability of the learner when there is a discrepancy between the source and target domain.

\begin{theorem}
\label{theorem:single_learner}
(Single learner)
Let $g$, $c_1$, and $c_2$ be a feature generator and two linear binary classifiers. Let $\bm{x}_s$ and $\bm{x}_t$ be the source data and target data, respectively. The label of $\bm{x}_s$ and $\bm{x}_t$ are $y_s \in \{0, 1\}$ and $y_t \in \{0, 1\}$. 
Assume that three orthogonal hidden features $\bm{v}_1$, $\bm{v}_2$, $\bm{v}_3$, $\parallel\bm{v}_1\parallel = \parallel\bm{v}_2\parallel = \parallel\bm{v}_3\parallel = 1$ exist in the all source data $\bm{x}_s$. The linear combination of the hidden features $k_1\bm{v}_1 + k_2\bm{v}_2 + k_3\bm{v}_3$ can be extracted from the feature generator $g$ through the supervised learning. Here $\{k_i \geq 0 |i \in \{ 1, 2, 3\}\}$ are the coefficients of different hidden features. The activation function of classifier 
is set as ReLU and thus $c(g(\bm{x})) \geq 0$. The prediction $\hat{y}=1$ if $c(g(\bm{x})) > 0$, else $\hat{y}=0$.
Because the source domain and the target domain are different, only a part of target data (ratio = $\alpha$, $\alpha < 1$) contain all three hidden features $\bm{v}_1$, $\bm{v}_2$, and $\bm{v}_3$. Assume $\frac{1-\alpha}{3}$ target data only contain one hidden feature $\bm{v}_1$, $\bm{v}_2$, or $\bm{v}_3$, respectively. After training $c_1$ and $c_2$ to maximize their discrepancy (L1 distance) and training $g$ to minimize the classifier discrepancy (L1 distance) in the target domain, there is a probability that $g$ can only learn a subset of features, which means $\exists k \in \{k_1, k_2, k_3\}, k = 0$.
\end{theorem}

\begin{proof}
Since $c_1$ and $c_2$ are linear:
\begin{equation}
\label{eq:linear_constraint}
\begin{split}
c_1(g(\bm{x})) &= c_1(k_1\bm{v}_1 + k_2\bm{v}_2 + k_3\bm{v}_3) = k_1c_1(\bm{v}_1) + k_2c_1(\bm{v}_2) + k_3c_1(\bm{v}_3)\\
c_2(g(\bm{x})) &= c_2(k_1\bm{v}_1 + k_2\bm{v}_2 + k_3\bm{v}_3) = k_1c_2(\bm{v}_1) + k_2c_2(\bm{v}_2) + k_3c_2(\bm{v}_3)\\
\end{split}
\end{equation}

Then the classifier discrepancy in the target domain is:
\begin{equation}
\label{eq:classifer_discrepancy}
\begin{split}
\epsilon_{T_g}(c_1, c_2) &= \bm{E}_{g(\bm{x}_t) \sim T_g}\big[|c_1(g(\bm{x}_t)) - c_2(g(\bm{x}_t))|\big]\\
&= \alpha |c_1(k_1\bm{v}_1 + k_2\bm{v}_2 + k_3\bm{v}_3) - c_2(k_1\bm{v}_1 + k_2\bm{v}_2 + k_3\bm{v}_3)| \\
&+ \frac{1-\alpha}{3}\big[|c_1(k_1\bm{v}_1)-c_2(k_1\bm{v}_1)| + |c_1(k_2\bm{v}_2)-c_2(k_2\bm{v}_2)| \\
&+ |c_1(k_3\bm{v}_3)-c_2(k_3\bm{v}_3)|\big]\\
&= (\alpha + \frac{1-\alpha}{3})\big[k_1|c_1(\bm{v}_1)-c_2(\bm{v}_1)| + k_2|c_1(\bm{v}_2)-c_2(\bm{v}_2)| \\
&+ k_3|c_1(\bm{v}_3)-c_2(\bm{v}_3)|\big] \\
&\leq \frac{1+2\alpha}{3} \big[k_1|c_1(\bm{v}_1)| +k_1|c_2(\bm{v}_1)| + k_2|c_1(\bm{v}_2)|+k_2|c_2(\bm{v}_2)| \\
&+ k_3|c_1(\bm{v}_3)|+k_3|c_2(\bm{v}_3)|\big] \big]
\end{split}
\end{equation}
where $g(\bm{x}_t) = k_i \bm{v}_i, i\in\{1, 2, 3\}$ if $\bm{x}_t$ only contains one feature $\bm{v}_i$. The maximum value is achieved if and only if $\{c_1(\bm{v}_i) c_2(\bm{v}_i) \leq 0 | i\in\{1, 2, 3\}\}$.

To maximize the $\epsilon_{T_g}(c_1, c_2)$, $c_1(\bm{v}_i) c_2(\bm{v}_i) \leq 0$ for every $i \in \{1, 2, 3\}$. Since $c_j(\bm{v}_i) \geq 0, j \in \{1, 2\}$:
\begin{equation}
\label{eq:zero_classification}
\begin{split}
\exists c_j, c_j(\bm{v}_i) = 0, j \in \{1, 2\}, i \in \{1, 2, 3\}.
\end{split}
\end{equation}

Because $c_1$ and $c_2$ and trained to classify source data correctly:
\begin{equation}
\label{eq:supervised_constraints}
\begin{split}
c_1(g(\bm{x}_s))&=c_1(k_1\bm{v}_1 + k_2\bm{v}_2 + k_3\bm{v}_3)\\
&= k_1c_1(\bm{v}_1) + k_2c_1(\bm{v}_2) + k_3c_1(\bm{v}_3)> 0\\
c_2(g(\bm{x}_s))&=c_2(k_1\bm{v}_1 + k_2\bm{v}_2 + k_3\bm{v}_3)\\
& = k_1c_2(\bm{v}_1) + k_2c_2(\bm{v}_2) + k_3c_2(\bm{v}_3)> 0
\end{split}
\end{equation}

Combining \eqref{eq:zero_classification} and \eqref{eq:supervised_constraints}, it can be learned:
\begin{equation}
\label{eq:valid_feature_num}
\begin{split}
\#N_1 + \#N_2 \in \{3, 4\}, \text{for }i_1 \in N_1\text{ and }i_2 \in N_2, c_1(\bm{v}_{i_1}) = c_2(\bm{v}_{i_2}) = 0 
\end{split}
\end{equation}
where $\#N_1$ and $\#N_2$ indicate the number of items in the set $N_1$ and $N_2$, respectively.

If $\#N_1 + \#N_2 = 3$, without loss of generality, 
we can assume $c_1(\bm{v}_1) = 1, c_1(\bm{v}_2)= c_1(\bm{v}_3)= 0 $ while $c_2(\bm{v}_1) = 0, c_2(\bm{v}_2)= c_2(\bm{v}_3) = 1$.

Because of \eqref{eq:linear_constraint} and \eqref{eq:supervised_constraints}:
\begin{equation}
\label{eq:k_constraint}
\begin{split}
&c_1(g(\bm{x})) = k_1c_1(\bm{v}_1) + k_2c_1(\bm{v}_2) + k_3c_1(\bm{v}_3) = k_1 > 0 \\
&c_2(g(\bm{x})) = k_1c_2(\bm{v}_1) + k_2c_2(\bm{v}_2) + k_3c_2(\bm{v}_3) = k_2 + k_3 > 0\\
\end{split}
\end{equation}

The last step of MCD is to train the feature generator to minimize the classifier discrepancy: 
\begin{equation}
\label{eq:supervised_constraint_1}
\begin{split}
\min_{k_1, k_2, k_3} \epsilon_{T_g}(c_1, c_2) &= \min_{k_1, k_2, k_3} \frac{1+2\alpha}{3}\big[|k_1c_1(\bm{v}_1)-k_1c_2(\bm{v}_1)| \\ &+ |k_2c_1(\bm{v}_2)-k_2c_2(\bm{v}_2)| + |k_3c_1(\bm{v}_3)-k_3c_2(\bm{v}_3)|\big] \\
&=\min_{k_1, k_2, k_3}\frac{1+2\alpha}{3}\big[k_1 + k_2 + k_3\big]
\end{split}
\end{equation}

Because $k_1 + k_2 + k_3 \geq k_1 + k_2$ and $k_1 + k_2 + k_3 \geq k_1 + k_3$ as $\{k_i \geq 0 | i\in\{1, 2, 3\}\}$ and because of \eqref{eq:k_constraint} and \eqref{eq:supervised_constraint_1}, the condition of minimizing the classifier discrepancy is that $\exists i \in \{2, 3\}, k_i = 0$. 

Since $\exists i \in \{2, 3\}, k_i = 0$, $c(g(\bm{x}_t)) = 0$ if $\bm{x}_t$ only contains feature $\bm{v}_i, i \in \{2, 3\}$. Then the target accuracy is $1 - \frac{1-\alpha}{3}$.   

According to the symmetry, we can find that the training result of MCD for the single model is that $g$ can only learn a subset of features: $\exists k \in \{k_1, k_2, k_3\}, k = 0$, if a classifier can recognize two features. In this case, the target accuracy is $1 - \frac{1-\alpha}{3}$.

If $\#N_1 + \#N_2 = 2$, without loss of generality, 
we can assume $c_1(\bm{v}_1) = 1, c_1(\bm{v}_2)= c_1(\bm{v}_3)= 0 $ while $c_2(\bm{v}_1) = c_2(\bm{v}_3) = 0, c_2(\bm{v}_2) = 1$.

In this case, minimizing the classifier discrepancy equals: 
\begin{equation}
\begin{split}
\min_{k_1, k_2, k_3} \epsilon_{T_g}(c_1, c_2) &= \min_{k_1, k_2, k_3} \frac{1+2\alpha}{3}\big[|k_1c_1(\bm{v}_1)-k_1c_2(\bm{v}_1)| \\ &+ |k_2c_1(\bm{v}_2)-k_2c_2(\bm{v}_2)| + |k_3c_1(\bm{v}_3)-k_3c_2(\bm{v}_3)|\big] \\
&=\min_{k_1, k_2, k_3}\frac{1+2\alpha}{3}\big[k_1 + k_2\big]
\end{split}
\end{equation}
where the $k_3$ does not affect the result and thus $k_3$ can be either 1 or 0, and the probability of each case is 50\%.

Then $c(g(\bm{x}_t)) = 0$ if $\bm{x}_t$ only contains feature $\bm{v}_3$. The target accuracy will be $1 - \frac{1-\alpha}{3}$.

According to the above two situations, there is a high probability ($p > 50\%$) that the single feature generator only learns a subset of features. Moreover, the target accuracy in this theoretical experiment is $1 - \frac{1-\alpha}{3}$ for the single learner.
\end{proof}

The single learner may only learn a subset of features. On the contrary, ensemble multiple learners may learn all features. In the \autoref{theorem:ensemble_learners}, we will demonstrate that maximizing the feature diversity allows feature generators to learn all features.

\begin{theorem}
\label{theorem:ensemble_learners}
(Ensemble learners) Let $g_1$, $g_2$, and $g_3$ be three feature generators. Each feature generator is able to learn the linear combination of three orthogonal hidden features $k_{i1}\bm{v}_1 + k_{i2}\bm{v}_2 + k_{i3}\bm{v}_3$, $\parallel\bm{v}_1\parallel = \parallel\bm{v}_2\parallel = \parallel\bm{v}_3\parallel = 1$. After maximizing the diversity (L1 distance) of features: 
\begin{equation}
\begin{split}
L_{d,g} = \bm{E}_{\bm{x}}\Big[\sum_{i=1}^{3}\lvert g_i(\bm{x}) - \bm{E}_{i\in[1,3]} g_i(\bm{x})\lvert \Big].
\end{split}
\end{equation}
the ensemble of three learners will learn all hidden features:
\begin{equation}
\begin{split}
\exists i, k_{ij} > 0, i\in\{1, 2, 3\} \text{ for every } j \in \{1, 2, 3\}.
\end{split}
\end{equation}
\end{theorem}

\begin{proof}
According to \autoref{theorem:single_learner}, the discrepancy of features:
\begin{equation}
\begin{split}
L_{d,g} &= r_s \bm{E}_{\bm{x}_s}\Big[\sum_{i=1}^{3}\lvert g_i(\bm{x}_s) - \bm{E}_{i\in[1,3]} g_i(\bm{x}_s)\lvert \Big] \\ 
&+ r_t \bm{E}_{\bm{x}_t}\Big[\sum_{i=1}^{3}\lvert g_i(\bm{x}_t) - \bm{E}_{i\in[1,3]} g_i(\bm{x}_t)\lvert \Big]\\
&= (r_s + r_t\alpha)\sum_{i=1}^{3}\lvert \sum_{j=1}^{3} (k_{ij}-\bm{E}_{i\in[1,3]}k_{ij})\bm{v}_j\lvert \\
&+ r_t\frac{1-\alpha}{3}\sum_{i=1}^{3} \sum_{j=1}^{3} \lvert(k_{ij}-\bm{E}_{i\in[1,3]}k_{ij})\bm{v}_j\lvert
\end{split}
\end{equation}
where $r_s$ and $r_t$ are the ratio of the sample number of source data and the target data to the total sample number, respectively.

For $L_1$ (Manhattan) distance:
\begin{equation}
\begin{split}
\lvert \sum_{i=1}^{3} k_i \bm{v}_i\lvert = \sum_{i=1}^{3} \big\lvert k_i \parallel \bm{v}_i\parallel \big\lvert=
\sum_{i=1}^{3} \lvert k_i\lvert.
\end{split}
\end{equation}

Hence:
\begin{equation}
\begin{split}
L_{d,g} &= (r_s + r_t\alpha)\sum_{i=1}^{3} \sum_{j=1}^{3} \lvert k_{ij}-\bm{E}_{i\in[1,3]}k_{ij}\lvert \\
&+ r_t\frac{1-\alpha}{3}\sum_{i=1}^{3} \sum_{j=1}^{3} \lvert k_{ij}-\bm{E}_{i\in[1,3]}k_{ij}\lvert
\\&= (r_s + r_t\frac{1+2\alpha}{3})\sum_{i=1}^{3} \sum_{j=1}^{3} \lvert k_{ij}-\bm{E}_{i\in[1,3]}k_{ij}\lvert
\end{split}
\end{equation}

Without loss of generality, we can assume $k_{1j} \leq k_{2j} \leq k_{3j}$. Then:
\begin{equation}
\begin{split}
\sum_{i=1}^{3} \lvert k_{ij}-\bm{E}_{i\in[1,3]}k_{ij}\lvert &= \bm{E}_{i\in[1,3]}k_{ij} - k_{1j} + k_{3j} - \bm{E}_{i\in[1,3]}k_{ij} + \lvert k_{2j} - \bm{E}_{i\in[1,3]}k_{ij} \lvert\\
&= k_{3j} - k_{1j} + \lvert k_{2j} - \frac{k_{1j} + k_{2j} + k_{3j}}{3}\lvert\\
&=k_{3j} - k_{1j} + \lvert \frac{2}{3}(k_{2j} - \frac{k_{1j}+k_{3j}}{2})\lvert \\
&\leq \frac{4}{3}(k_{3j} - k_{1j})
\end{split}
\end{equation}
where the maximum value is achieved when $k_{2j} = k_{3j}$ or $k_{2j} = k_{1j}$ and $k_{1j} = 0$.

Therefore, maximizing the feature discrepancy will cause max$\{k_{ij} | i \in\{1, 2, 3\}\} > 0$ and min$\{k_{ij} | i \in\{1, 2, 3\}\} = 0$. For every $j \in \{1, 2, 3\}, \exists i = \argmax_{i \in\{1, 2, 3\}} k_{ij}, k_{ij} > 0$. Hence, all three hidden features will be learned. 

According to \eqref{eq:k_constraint}, each feature generator should learn at least two hidden features. Therefore, for each $\bm{x}_t$ that only contains one feature $\bm{v}_l$, $\exists i = \argmin_{i \in\{1, 2, 3\}}, k_{ij} = 0|j\neq l$, and $g_i(\bm{x}_s) = k_{il}\bm{v}_l + k_{im} \bm{v}_m, m \neq j$. 

Because the classifiers are trained to maximize the classifier discrepancy, which is shown in \eqref{eq:classifer_discrepancy}, $c_{i1}(\bm{v}_m) c_{i2}(\bm{v}_m) \leq 0$. Therefore, $\exists n \in \{1, 2\}, c_{in}{\bm{v}_m} = 0$. According to \eqref{eq:supervised_constraints}, the source data should be classified accurately: 
\begin{equation}
\begin{split}
&c_{in}(g_i(\bm{x}_s)) > 0 \\
&c_{in}(k_{il}\bm{v}_l + k_{im}\bm{v}_m) > 0 \\
&k_{il}c_{in}\bm{v}_l + 0 > 0 \\
&c_{in}\bm{v}_l > 0
\end{split}
\end{equation}

In conclusion, for each $\bm{x}_t$ which contains feature $\bm{v}_l$, $\exists i \in \{1, 2, 3\} \text{ and } n\in\{1, 2\}$, $c_{in}(g_i(\bm{x}_t)) = k_{il}c_{in}(\bm{v}_l) > 0$. Therefore, the target accuracy for the ensemble learners in this theoretical experiment is 100\%.
\end{proof}

\begin{theorem}
\label{theorem:knowledge_distillation}
(Knowledge distillation) Let $\{g_i, c_{in}| i \in \{ 1, 2, 3\}, n\in \{1, 2\}\}$, be the teacher ensemble learners mentioned \autoref{theorem:ensemble_learners}. A student learner with only one feature generator $g_s$ and one classifier $c_s$ can also classify all target accurately $g_s(\bm{x}) =  k_{s1}\bm{v}_1 + k_{s2}\bm{v}_2 + k_{s3} \bm{v}_3 > 0$, which indicates $\{k_{sj} > 0 | j\in \{ 1, 2, 3\}\}$ and $\{c_s(\bm{v}_j) > 0| j \in \{ 1, 2, 3\}\}$, by learning the pseudo labels of the teach network.
\end{theorem}

\begin{proof}
For $\bm{x}_t$ that only contains feature $\bm{v}_j$, the student learner is trained to by the soft label of the teacher learner. As proved in \autoref{theorem:ensemble_learners}, $\exists i \in \{1, 2, 3\}$ and $n \in \{1, 2\}$,
$c_{in}(g_i(\bm{x}_t)) > 0$. Then the soft label, which is average of ensemble learners, $\bar{y}(\bm{x}_t) > 0$. After minimizing the cross entropy between $\bar{y}(\bm{x}_t)$ and $c_s(g_s(\bm{x}_t))$, $c_s(g_s(\bm{x}_t)) > 0$.

Because $\bm{x}_t$ only contains feature $\bm{v}_j$,
\begin{equation}
\begin{split}
&c_s(g_s(\bm{x}_t)) = c_s(k_{sj}\bm{v}_j) = k_{sj}
c_s(\bm{v}_j) > 0 \\
&\rightarrow k_{sj} > 0 \text{ and } c_s(\bm{v}_j) > 0, j \in \{1, 2, 3\}.
\end{split}
\end{equation}
\end{proof}

The above theorems illustrate the limitations of the single learner and the advantages of the ensemble learner and knowledge distillation in a theoretical situation. The following sections will further introduce the strength of the EDHKD and its implementation details.

\subsection{General theoretical bases of EDHKD}
\label{subsec:theoretical_basis}
\textcolor{black}{
Ensemble diverse hypotheses and knowledge distillation (EDHKD) is proposed in the present paper because EDHKD may learn diverse features and theoretically decrease the upper bound of the error of predicting the intent of a target subject. As introduced in \autoref{subsec:thought_experiment}, the single learner may only learn a subset of features. Besides, the number of parameters for a neural network is large and the loss function of the unsupervised cross-subject adaptation may not be convex, training the neural network does not guarantee finding the global minimum. This issue may be alleviated by combining many learners. However, the combination of learners is only helpful if they provide different predictions, 
which is the first reason for designing EDH. Each hypothesis may make a mistake but different hypotheses may make different mistakes. After combining the predictions of all hypotheses, some mistakes may be resolved. The error of the ensemble prediction can be low if each hypothesis is accurate and all hypotheses are diverse. 
}
\begin{theorem}
\label{theorem:error_discrepancy_decomposition}
(error-discrepancy decomposition)
Let $\bar{h}$ be the mean of all individual hypotheses $h_k$ and let $e(h_k)$, $e(\bar{h})$, and $d({h_k})$ be the error of the hypothesis $h_k$, the error of the average hypothesis $\bar{h}$, and the discrepancy between the hypothesis $h_k$ and the average hypothesis $\bar{h}$. $e(\bar{h})$ equals:
\begin{equation}
\label{eq:error-discrepany_decompostion}
\begin{split}
&e(\bar{h}) = \bm{E}_{k}e(h_k) - \bm{E}_{k}d({h_k}),\\
\end{split}
\end{equation}
where $\bm{x}$ and $y$ are input data and label. $\bm{E}$ indicates the expectation. Besides,
\begin{equation}
\label{eq:error-discrepany_decompostion_items}
\begin{split}
&e(\bar{h}) = \bm{E}_{\bm{x}}(\bar{h}(\bm{x})-y)^2,\\
&e(h_k) = \bm{E}_{\bm{x}}(h_k(\bm{x})-y)^2,\\
&d({h_k}) = \bm{E}_{\bm{x}}(h_k(\bm{x})-\bar{h}(\bm{x}))^2.\\
\end{split}
\end{equation}
\end{theorem}

\begin{proof}
\begin{align*}
\bm{E}_{k}d({h_k}) 
&=\bm{E}_{k}\bm{E}_{\bm{x}}(h_k(\bm{x})-\bar{h}(\bm{x}))^2\\
&=\bm{E}_{k}\bm{E}_{\bm{x}}[h_k(\bm{x}) - y - (\bar{h}(\bm{x}) - y)]^2\\
&=\bm{E}_{k}\bm{E}_{\bm{x}}[(h_k(\bm{x}) - y)^2 + (\bar{h}(\bm{x}) - y)^2]\\
&-2\bm{E}_{k}\bm{E}_{\bm{x}}[(h_k(\bm{x}) - y)(\bar{h}(\bm{x}) - y)]\\
&=\bm{E}_{k}\bm{E}_{\bm{x}}[(h_k(\bm{x}) - y)^2 -(\bar{h}(\bm{x}) - y)^2]\\
&=\bm{E}_{k}\bm{E}_{\bm{x}}(h_k(\bm{x}) - y)^2 - \bm{E}_{\bm{x}}(\bar{h}(\bm{x}) - y)^2\\
&=\bm{E}_{k}e(h_k) - e(\bar{h}).
\end{align*}
Therefore, $e(\bar{h}) = \bm{E}_{k}e(h_k) - \bm{E}_{k}d({h_k})$.
\end{proof}
\autoref{theorem:error_discrepancy_decomposition} indicates that the error of the average hypothesis is lower than the average error of the individual hypotheses if the discrepancy of hypotheses is not zero. The error of the average hypothesis will further decrease if the error of individual hypotheses decreases and the discrepancy of the hypotheses increases. EDH can improve the accuracy of predicting the subject intent because it may increase the discrepancy between different hypotheses. Besides, EDH may decrease the error of the individual hypothesis $e(h_k)$ in the target domain, which is the second reason for proposing EDH. It should be noted that the discrepancy of hypotheses is convenient to estimate because it does not require the labels of data, but the error of the individual hypotheses in the target domain, where the target labels are not available, is hard to estimate. The present paper addresses this issue by estimating the upper bound of the error of the individual hypotheses in the target domain.

\begin{theorem}
\label{theorem:upper_bound_of_target_error}
(Upper bound of the target error)
Let $h$ be a hypothesis that consists of a feature generator $g$ and a classifier $c$: $h = c \circ g$. Let $C$ be the classifier space and let $S_g$ and $T_g$ be the transformed source and target domain using the feature generator $g$. The target classification error $\epsilon_{T_g}(c)$ is upper bounded by:
\begin{equation}
\label{eq:sup_error}
\begin{split}
&\epsilon_{T_g}(c) \leq \epsilon_{S_g}(c) + \lambda + {d_{C \Delta C}(S_g,T_g)}/{2},\\
&\lambda = \min_{c\in C}\big(\epsilon_{S_g}(c) + \epsilon_{T_g}(c)\big),\\
&d_{C \Delta C}(S_g,T_g) = 2 \sup_{c_1,c_2 \in C} |\epsilon_{T_g}(c_1, c_2) - \epsilon_{S_g}(c_1, c_2)|,\\
& \epsilon_D(c) = \bm{E}_{g(\bm{x}) \sim D}\big[|c(g(\bm{x}))-y|\big],\\
& \epsilon_D(c_1, c_2) = \bm{E}_{g(\bm{x}) \sim D}\big[|c_1(g(\bm{x})) - c_2(g(\bm{x}))|\big],
\end{split}
\end{equation}
where $\epsilon_{D}(c)$ and $\epsilon_D(c_1, c_2)$ are classification error of the classifier $c$ and the discrepancy of two classifiers $c_1$ and $c_2$ in the domain $D$ ($D = S_g$ or $T_g$), respectively. $d_{C \Delta C}(S_g,T_g)$ indicates the $C \Delta C$-distance, $\lambda$ denotes the combined error in the transformed source and target domain, $\sup$ indicates the upper bound, and $\bm{E}$ 
represents the expectation.
\end{theorem}
\begin{proof}
Ben-David \etal \citep{ben-david_analysis_2007} proved a triangle inequality of classification error: for any classification functions $c_1$,$c_2$, and $c_3$, $\epsilon(c_1, c_2) \leq \epsilon(c_1, c_3)+\epsilon(c_2, c_3)$. Therefore,
\begin{align*}
\epsilon_{T_g}(c)
&\leq \epsilon_{T_g}(c^*) + \epsilon_{T_g}(c, c^*)\\
&\leq \epsilon_{T_g}(c^*) + \epsilon_{S_g}(c, c^*) + |\epsilon_{T_g}(c, c^*) - \epsilon_{S_g}(c, c^*)|\\
&\leq \epsilon_{T_g}(c^*) + \epsilon_{S_g}(c) + \epsilon_{S_g}(c^*) + |\epsilon_{T_g}(c, c^*) - \epsilon_{S_g}(c, c^*)|\\
&\leq \epsilon_{S_g}(c) + \lambda + |\epsilon_{T_g}(c, c^*) - \epsilon_{S_g}(c, c^*)|\\
&\leq \epsilon_{S_g}(c) + \lambda + \sup_{c_1,c_2 \in C}|\epsilon_{T_g}(c_1, c_2)-\epsilon_{S_g}(c_1, c_2)|\\
&= \epsilon_{S_g}(c) + \lambda + {d_{C \Delta C}(S_g,T_g)}/{2},
\end{align*}
where $c^* = \argmin_{c\in C}\big(\epsilon_{S_g}(c) + \epsilon_{T_g}(c)\big)$.
\end{proof}

\begin{figure*}[h!]
    \centering
    \includegraphics[width=\textwidth]{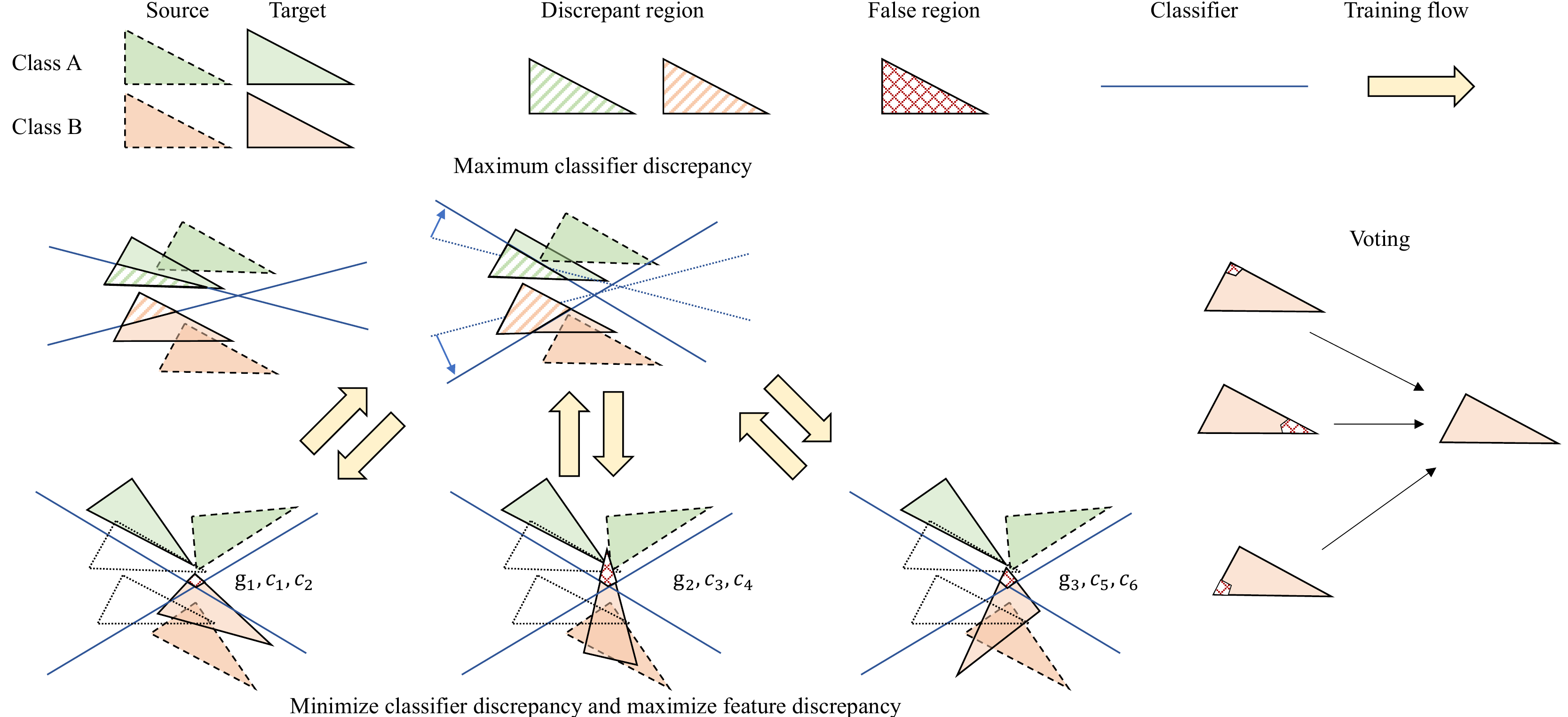}
    \caption{Illustration of the EDH principle. In this figure, EDH consists of three diverse feature generators ($g_1,g_2, g_3$) and two diverse classifiers ($c_1,...,c_6$) for each feature generator. The classifiers are first trained to maximize the discrepancy region to find the upper bound of the classifier discrepancy. Then three diverse feature generators are trained to minimize the classifier discrepancy and maximize the feature discrepancy. After adaptation, the classifier discrepancy can be minimized but two classifiers may misclassify the same region. Because the diverse feature generators may transform the target data into different feature spaces, the false regions for different feature spaces may be different. After voting, the target data can still be accurately recognized.}
    \label{fig:EDH_principle} 
\end{figure*}

\autoref{theorem:error_discrepancy_decomposition} and \autoref{theorem:upper_bound_of_target_error} demonstrate that EDH can decrease the upper bound of the error for the hypothesis in the target domain by reducing the source error, the combined error, and the $C \Delta C$-distance, and increasing the discrepancy of hypotheses. The corresponding loss functions that may decrease the target error can be derived. In the following parts of this section, several proposed loss functions and their corresponding theoretical bases will be introduced in detail.

\subsubsection{Minimization of the Source Error}
\label{subsubsec:source_error}
\autoref{theorem:upper_bound_of_target_error} indicates that the target error is positively correlated with the source error. It is convenient to decrease the error on the source domain by  minimizing the cross-entropy between the classification results and the labels $y_s$ of source data $\bm{x}_s$:
\begin{equation}
\label{eq:L_s}
\begin{split}
L_s = \bm{E}_{\bm{x}_s \sim D_S} \sum_{n=1}^{N} -I[n = y_s] \log{P_{n}(y|\bm{x}_s)},
\end{split}
\end{equation}
where $\bm{E}$ is the expectation operator. $I[n = y_s]$ is a indicator function, which is 1 when $y_s$ equals $n$ and 0 otherwise; and $P_{n}$ represents  the predicted probability for class $n$. 

\subsubsection{Minimization of the Combined Error}
The minimum combined error $\lambda$ is dependent on the bias between the source domain and the target domain and the capacity of the classifier space $C$. Ben-David \etal suggest that if the minimum combined error $\lambda$ is relatively large, it is not likely to build a hypothesis that performs well in both the source and target domain \citep{ben-david_theory_2010}. $\lambda$ is usually small if the capacity of the classifier space is sufficiently high. In the present paper, the classifiers are neural networks, whose parameters are large enough to form a high-capacity 
classifier space, and thus the $\lambda$ can be assumed to be small.

\subsubsection{Minimization of the $C \Delta C$-distance}
After minimizing the source error $\epsilon_{S_g}(c)$ and the combined error $\lambda$, the remaining items in \eqref{eq:sup_error} is the $C \Delta C$-distance. The $C \Delta C$-distance in \eqref{eq:sup_error} measures the discrepancy between the disagreement of two classifiers ($c_1$ and $c_2$) on the transformed source domain and that in the target domain. Because the feature generators and the classifiers are first optimized by the cross-entropy loss $L_s$ shown in \eqref{eq:L_s} to minimize the source error, the classifier space after optimization may change to the space where the source error is minimum. In the optimized classifier space $C$, the disagreement of two arbitrary classifiers on the source domain will be small (see \autoref{fig:EDH_principle}), which indicates that $\epsilon_{S_g}(c_1, c_2)$ will be small. Therefore, the $
C\Delta C$-distance shows that a small change of the classifier in the source domain can result in a large change of the classifier in the target domain. To minimize the $C \Delta C$-distance, the upper bound of the disagreement between two classifiers in the target domain should be minimized:
\begin{equation}
\label{eq:minimize_CdC_distance}
\begin{split}
&\min_{T_g}\sup_{c_1,c_2 \in C}\bm{E}_{g(\bm{x}) \sim T_g}\big[|c_1(g(\bm{x})) - c_2(g(\bm{x}))|\big].
\end{split}
\end{equation}

The disagreement of classifiers can be measured but its upper bound cannot be measured directly. To estimate the upper bound, \eqref{eq:minimize_CdC_distance} is converted into  a min-max problem. The discrepancy can be changed by two different classifiers $c_1$ and $c_2$, and the transformed target domain $T_g$ can be adjusted by a feature generator $g$. The classifiers $c_1$ and $c_2$ are trained to maximize the discrepancy between each other and find the upper bound of the discrepancy of hypotheses. Then the feature generator $g$ is trained to minimize the discrepancy of the classifiers and decrease the upper bound of the classifier discrepancy. The overall process is shown in \autoref{fig:EDH_principle} and can be concluded by:
\begin{equation}
\label{eq:min-max_problem}
\begin{split}
&\min_{g}\max_{c_1, c_2}\bm{E}_{g(\bm{x}) \sim T_g}\big[|c_1(g(\bm{x})) - c_2(g(\bm{x}))|\big].
\end{split}
\end{equation}

There are only one feature generator and two classifiers in \eqref{eq:min-max_problem}, which cannot be directly used to train the hypotheses in the present paper. The reason is that the present paper combines multiple diverse hypotheses, including multiple feature generators $g_i$ and classifiers $c_k$, to further decrease the target error. After increasing the number of classifiers, the loss of classifier discrepancy $L_{d, c}$ should be:
\begin{equation}
\label{eq:L_dis_C}
\begin{split}
&L_{d,c} = \sum_{i=1}^{n_g}\sum_{j=1}^{n_c}L_{d, c}^{ij},\\
&L_{d,c}^{ij}=\bm{E}_{\bm{x}_t \sim D_T} \Big[{\sum_{n=1}^{N} (\lvert P_{n}^{ij}(y_t|\bm{x}_t) - \bar{P}_{n}^{i}(y_t|\bm{x}_t)\lvert)}/N\Big],\\
&P_{n}^{ij}(y_t|\bm{x}_t) = c_k(g_i(\bm{x}_t))[n], k = n_c(i-1) + j,\\
&\bar{P}_{n}^{i}(y_t|\bm{x}_t) = \bm{E}_{j\in[1,n_c]}\big[P_{n}^{ij}(y_t|\bm{x}_t)\big],
\end{split}
\end{equation}
where $L_{d,c}^{ij}$ denotes the loss of classifier discrepancy for a hypothesis whose generator and classifier are $g_i$ and $c_k$, respectively.
$P_{n}^{ij}(y_t|\bm{x}_t)$ indicates predictions for class $n$ when the input is $\bm{x}_t$ and the hypothesis consists of $g_i$ and $c_k$. $D_T$ denotes the original target domain; and $\bar{P}_{n}^{i}(y_t|\bm{x}_t)$ is the average prediction of different hypotheses whose generator is $g_i$.

\subsubsection{Maximization of the Feature Discrepancy}
Besides the classifier discrepancy, it is also necessary to maximize the feature discrepancy to learn a complete set of hidden features, which is demonstrated in \autoref{theorem:ensemble_learners}. Moreover, both the accuracy of each hypothesis and the diversity of a hypothesis can decrease the target error of the combined hypotheses, which is supported by \autoref{theorem:error_discrepancy_decomposition} and \autoref{fig:EDH_principle}. Therefore, the present paper designs a loss $L_{d,g}$ to maximize the feature discrepancy:
\begin{equation}
\label{eq:L_dis_G}
\begin{split}
&L_{d,g} = \bm{E}_{\bm{x}}\Big[\sum_{i=1}^{n_g}\lvert g_i(\bm{x}) - \bm{E}_{i\in[1,n_g]} g_i(\bm{x})\lvert \Big].
\end{split}
\end{equation}

\subsubsection{Maximization of the Classifier Confidence}
EDH can further decrease the target error of the individual hypotheses by utilizing the cluster assumption. For unsupervised learning, labels of data are unknown, but one can still analyze the unlabeled dataset and reveal the intrinsic properties of the dataset, by clustering the data. Similarly, the cluster assumption can also improve the unsupervised domain adaptation. The cluster assumption presents that the dataset can be divided into different clusters and the data that come from the same cluster should belong to the same class \citep{shu_dirt-t_2018}. Therefore, the decision boundary should not cross high-density areas. This objective can be realized by minimizing the conditional entropy loss \citep{grandvalet_semi-supervised_2004}:

\begin{equation}
\label{eq:L_ent}
\begin{split}
&L_e = -\sum_{i=1}^{n_g}\sum_{j=1}^{n_c}\bm{E}_{\bm{x}_t \sim D_T}[P^{ij}(y_t|\bm{x}_t)^T \ln P^{ij}(y_t|\bm{x}_t)],\\
&P^{ij}(y_t|\bm{x}_t) = c_k(g_i(\bm{x}_t)), k = n_c(i-1) + j.\\
\end{split}
\end{equation}

There are several advantages to minimizing conditional entropy. First, the hypothesis space can be further constrained and the optimized hypothesis space may not include the region where the target data are dense. Therefore, the $C \Delta C$-distance $d_{C \Delta C}(S_g, T_g)$ can be decreased because changing the decision boundary in low-density regions will not cause a large classifier discrepancy in the target domain. Then the upper bound of the target error will reduce because of the decrease of the $C \Delta C$-distance. Second, the decision boundary can utilize the density information of the target data to 
restrict its choice, which may provide a good prior probability distribution and increase the classification accuracy.

\begin{figure*}[h!]
    \centering
    \includegraphics[width=\textwidth]{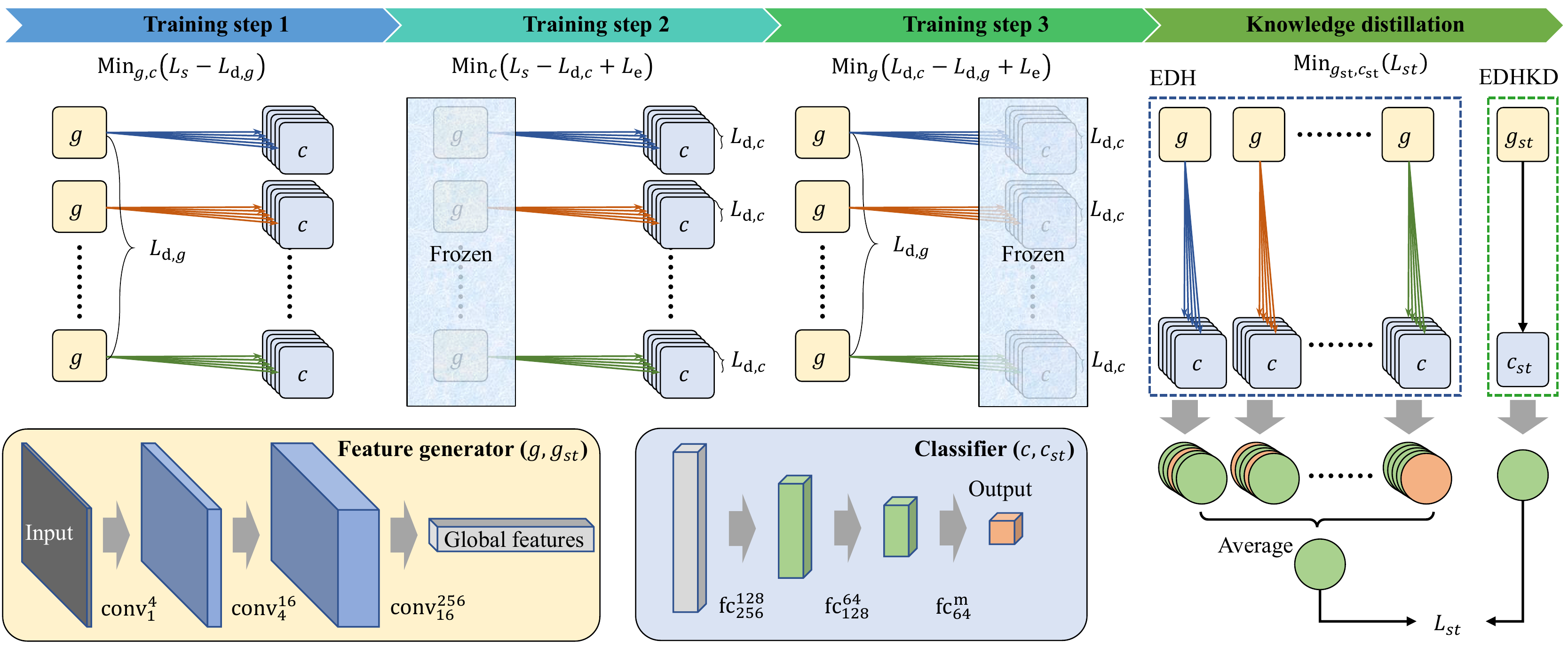}
    \caption{The network architecture and the training steps of the proposed EDH and EDHKD. $g$ and $c$ represent a feature generator and a classifier, respectively. The superscript and the subscript of the convolutional layer indicate the channel of output features and that of input features, respectively. Training steps and loss functions are discussed in \autoref{subsec:theoretical_basis} and \autoref{subsec:network_architecture}.} 
    \label{fig:network_architecture}
\end{figure*}

\subsection{Network architecture and training steps of EDHKD}
\label{subsec:network_architecture}
\subsubsection{Lightweight network architecture}
There is usually a trade-off between the fitting ability and the generalization ability, for a neural network. A large network has more parameters and can learn deeper features but it may overfit the training dataset, especially when there is no target validation set to determine the early-stop time. A lightweight neural network may have a better generalization ability but it may not fit the training dataset as accurately as with a large network. The present paper addresses this issue by adopting the ensemble method. The ensemble method is attractive because it can boost weak learners to make accurate predictions \citep{zhou_ensemble_2012}. Even though each learner is weak, the generalization ability of the ensemble method can be better than with a strong learner. Moreover, after the knowledge distillation, even a single learner can learn the knowledge from the ensemble learner.
This property allows us to design a lightweight neural network (see \autoref{fig:network_architecture}) so as to avoid overfitting, which is observed using the large network that was designed in our previous work \citep{zhang_unsupervised_2020}, and increase the efficiency. 


The feature generator in the present paper is a convolutional neural network. The kernel size and the stride in the first two convolutional layers are $(1\times1)$ and 1, respectively. These two layers are similar to a multi-layer perceptron which can extract deeper features. The kernel size in the third layer equals the size of the input image. This layer convolves all features into a global feature vector. After each convolutional layer and before the nonlinear activation function (ReLu6), batch normalization is applied to normalize the magnitude of the signals from different sensors and channels. Because extracting shallow features has already reduced the dimension of the input image, the max-pooling layer is not adopted here.

The classifier in the present paper is an artificial neural network that consists of three fully connected layers. The first two fully-connected layers with batch normalization and a nonlinear activation function (ReLu6) map the global feature vector to 64 hidden features. The last fully-connected layer maps these hidden features to the classification scores of the input image. 

There are five feature generators and 25 classifiers in the EDH. Each generator is shared by five classifiers to achieve the min-max training objective. Five generators are utilized to map the input data to different distributions. The classification results of each hypothesis are voted to find the most possible class of human intent, which is the class with the highest probability. The average predictions of the EDH are regarded as pseudo labels and are utilized to train the student learner (EDHKD), which only includes one feature generator and one classifier and is efficient.

\subsubsection{Training steps}
\label{subsubsec:training_steps}
The loss functions and their theoretical bases have been introduced in \autoref{subsec:theoretical_basis}. The present section indicates how to combine these functions and train the networks.

\textbf{Step 1.} 
Based on the theoretical analysis in \autoref{subsec:theoretical_basis}, the feature generators and classifiers are first trained to minimize the source classification error. The feature generators are also optimized to maximize the feature discrepancy: 
\begin{equation}
\label{eq:step_1}
\begin{split}
\min_{g_i, c_k} L_s - \xi_g L_{d,g}, i \in [1,n_g], k \in [1,n_g\cdot n_c],
\end{split}
\end{equation}
where $\xi_g$ is a constant weight of loss function $L_{d,g}$.

\textbf{Step 2.} 
To find the upper bound of the classifier discrepancy, the classifier discrepancy should be maximized to maximize the $C \Delta C$-distance. Besides, the entropy loss should be minimized to utilize the cluster assumption and increase the classifier confidence. In this phase, the classifiers $c_k$ are trained using the source error loss, the classifier discrepancy loss, and the entropy loss while the feature generators $g_i$ are frozen:
\begin{equation}
\label{eq:step_2}
\begin{split}
\min_{c_k} L_s - \xi_c L_{d, c} + \xi_e L_e, k \in [1,n_g\cdot n_c],
\end{split}
\end{equation}
where $\xi_c$ and $\xi_e$ are the constant weight of loss function $L_{d,c}$ and $L_e$.

\textbf{Step 3.} 
The objective of maximizing the classifiers is to find the upper bound of the classifier discrepancy. After finding the upper bound of the classifier discrepancy, the feature generators should be trained to minimize the upper bound of the classifier discrepancy and thus decrease the $C \Delta C$-distance. Moreover, the feature discrepancy is maximized to reduce the error of the average hypothesis. The entropy loss should also be minimized to transform the features of different classes into different clusters. In this step, classifiers $c_k$ are frozen and the feature generators $g_i$ are trained to adjust the target domain to match the source domain and minimize the upper bound of the classifier discrepancy, maximize the feature discrepancy, and increase the classification confidence: 
\begin{equation}
\label{eq:step_3}
\begin{split}
\min_{g_i} L_{d, c} - L_{d,g} + \xi_e L_e, i \in [1,n_g].
\end{split}
\end{equation}

These three steps are iterated for many mini-batches until achieving a fixed epoch. 

\textcolor{black}{\textbf{Knowledge distillation.}} After training, the EDH model can provide the pseudo labels of each target data. These pseudo labels are used to train the student network EDHKD by minimizing the cross-entropy between the pseudo labels and the prediction of EDHKD
\begin{equation}
\label{eq:L_student}
\begin{split}
L_{st} = \bm{E}_{\bm{x}_t \sim D_T} \sum_{n=1}^{N} -P_{n}(y_{te}|\bm{x}_t) \log{P_{n}(y_{st}|\bm{x}_t)},
\end{split}
\end{equation}
where $y_{te}$ and $y_{st}$ are the prediction of $\bm{x}_t$ by the teacher (EDH) and the student (EDHKD), respectively.

\subsection{Experimental setup}
Similar to previous research \citep{saito_maximum_2018}, we first implemented a pre-experiment to classify 2D intertwining moon points. The source samples consist of an upper moon and a lower moon and are shown in \autoref{fig:learning_process_moon}. Two moons are labeled 0 and 1, and each moon contains 1500 points with Gaussian noise $e\sim N(0, 0.05)$. The target samples are generated by rotating and translating the source samples. The rotation angles and the translations are shown in \autoref{fig:learning_process_moon}. The decision boundary of the binary classifier was drawn to visualize the classification results, which can be seen in  \autoref{fig:learning_process_moon}. Because the moon dataset has just two dimensions, the network size was further decreased. The network architecture and the hyper-parameters used in this experiment are given in \autoref{tab:training_parameters}.

\begin{table}[!h]
\centering
\caption {\label{tab:training_parameters} Hyperparameters of EDH and EDHKD on different datasets.}
\renewcommand{\arraystretch}{1} 
\begin{center}
\resizebox{0.7\columnwidth}{!}{
\begin{tabular}{l c c}
\toprule
Dataset & Moon & ENABL3S and DSADS\\
\midrule
Optimizer & Adam & Adam\\
Learning rate & $1\times10^{-3}$ & $2\times10^{-4}$\\
Weight of feature discrepancy loss $\xi_g$ & 3 & 5\\
Weight of classifier discrepancy loss $\xi_c$ & 3 & 5\\
Weight of entropy loss $\xi_e$ & 1 & 0.01\\
Training epochs & 50 & 100\\
Batch size & 200 & 256\\
Number of generators & 5 & 5\\
Number of classifiers & 25 & 25\\
Generator update number & 3 & 4\\
Architecture of generator & ANN & CNN\\
Length of global feature & 32 & 256\\
\end{tabular}}
\end{center}
\end{table}

In the present paper, two public datasets are utilized to evaluate the performance of the proposed methods. One dataset is the encyclopedia of able-bodied bilateral lower limb locomotor signals (\href{https://doi.org/10.6084/m9.figshare.5362627}{ENABL3S}) as collected by the Northwestern University \citep{hu_benchmark_2018}. The other dataset is the daily and sports activities data set (\href{http://archive.ics.uci.edu/ml/datasets/Daily+and+Sports+Activities}{DSADS}) provided by Bilken University \citep{barshan_recognizing_2014}. The signals in these two datasets have already been filtered and segmented by their developers, and the detailed processing methods are found in their papers \citep{hu_benchmark_2018, barshan_recognizing_2014}.

For ENABL3S \citep{hu_benchmark_2018}, the signals of ten able-bodied subjects were captured from experiments. Subjects were asked to walk on different types of terrains, and the locomotion mode was switched between standing (St), level ground walking (LW), stair ascent (SA), stair descent (SD), ramp ascent (RA), and ramp descent (RD). ENABL3S \citep{hu_benchmark_2018} contains filtered EMG, IMU, joint angle signals. The filtered signals were segmented by a 300 ms wide sliding window. Human intent is not an intuitive signal and its time of occurrence is difficult to accurately define. Therefore, gait events, such as toe-off and heel-contact, were utilized by previous researchers to estimate the time of switching between different locomotion modes \citep{huang_continuous_2011, xu_real-time_2018, zhang_subvision_2021}. The sliding windows began 300 ms before the gait events, and thus the segmented signals can be utilized to predict locomotion intent.

The shallow features were then extracted from the segmented signals. Ten shallow features of EMG signals are the mean absolute value, waveform length, the number of changing slope signs, the number of zero-crossing points, and the coefficients of a sixth-order autoregressive model. Six features, the mean, the standard deviation, the maximum, the minimum, the initial value, and the final value, were extracted from each channel of IMU and joint angle signals. The features of EMG and IMU are reshaped to a $33 \times 12$ single-channel image, in the present paper.

DSADS \citep{barshan_recognizing_2014} invited eight able-bodied subjects to perform 19 different activities, including sitting, jumping, standing, riding a bike, and running. DSADS \citep{barshan_recognizing_2014} contains signals captured from five 9-axis IMUs. The captured signals were segmented by a 5 s window. The features extracted from the segmented signals are the same as those of IMU signals in ENABL3S. Extracted features are reshaped to a $45 \times 6$ single-channel image. There is no transition between different motion modes in DSADS and thus DSADS cannot be used to predict human intent. DSADS is utilized to classify the human locomotion modes and compare the EDH with a benchmark method that has been proposed previously\citep{zhang_unsupervised_2020}.

ENABL3S and DSADS datasets include 22,000 and 9,000 signal segments, respectively. The data from each subject were randomly shuffled and divided into a training set (70\%) and a test set (30\%). It should be noted that there is no validation set because we assume the target dataset is unlabeled and we cannot obtain a labeled target validation set to fine-tune the neural network nor determine the early-stop time. In every experiment, a target subject was selected and the remaining subjects were source subjects, which may be called a leave-one-subject-out test. The proposed EDH was trained using the labeled source training dataset and the unlabeled target training dataset. After training a fixed number of epochs, the EDH was tested using the test set to evaluate the performance of cross-subject adaptation. After finishing an experiment, the next experiment selected a different subject as the target subject until transversing all subjects. 

In experiments, the proposed EDH and EDHKD were compared with different state-of-the-art methods, including LDA (linear discriminate analysis), SVM (support vector machines), ANN (artificial neural networks), CNN (convolutional neural network), the BM (benchmark) method utilized in the previous study \citep{zhang_unsupervised_2020}, DANN \citep{ganin_domain-adversarial_2016}, MCD \citep{saito_maximum_2018}, MMD (maximum mean discrepancy) \citep{long_learning_2017}, CORAL (correlation alignment) \citep{sun_deep_2016}, and DFA (Discriminative feature alignment) \citep{wang_discriminative_2021}. LDA, SVM, ANN, and CNN were only trained using the source training datasets and tested using the target test data. These four methods can be considered baseline methods. The network architectures of the feature generator and the label classifier used in CNN, DANN, MCD, MMD, CORAL, DFA, EDH, and EDHKD are the same. There is a domain classifier in DANN. The BM method used in the previous study adopted the same training strategy and loss functions as in MCD, but the network size of the BM method was much larger. The network architecture of the BM method is described in detail in the previous paper \citep{zhang_unsupervised_2020}.

All experiments were implemented on a desktop computer with an Intel Core i7-6700 CPU and an NVIDIA GeForce GTX 1080 GPU. 

\section{\textcolor{black}{Results}}
\label{sec:Cross-subjectResults}
\subsection{Evaluation using moon dataset}
The EDHKD, which only includes one feature generator and one classifier, was found to learn the knowledge from EDH successfully and outperform other methods on all moon datasets with different rotation angles and translations. As shown in \autoref{fig:learning_process_moon}, the proposed EDHKD can find the decision boundary that classifies the target moon dataset more accurately.  After training 50 epochs, the proposed EDHKD can achieve $96.9\%\pm5.0\%$ classification accuracy on the target moon dataset, which is significantly higher than ANN ($\uparrow 13.7\%,
P<0.001$) and DANN ($\uparrow 12.5\%, P<0.001$). $P$ is the probability that the null hypothesis is true, which is calculated using a one-way ANOVA with a post hoc test. The difference is regarded as significant if $P<0.05$. The EDHKD achieves $5.2\%$ higher classification accuracy than MCD, but the difference is not significant ($P=0.06$). The standard deviation of the classification accuracy for EDHKD is at least 1\% lower than those for the other three methods, which indicates that EDHKD is more stable than the other three methods. 

\begin{figure*}[htpb]
    \centering
    \includegraphics[width=\linewidth]{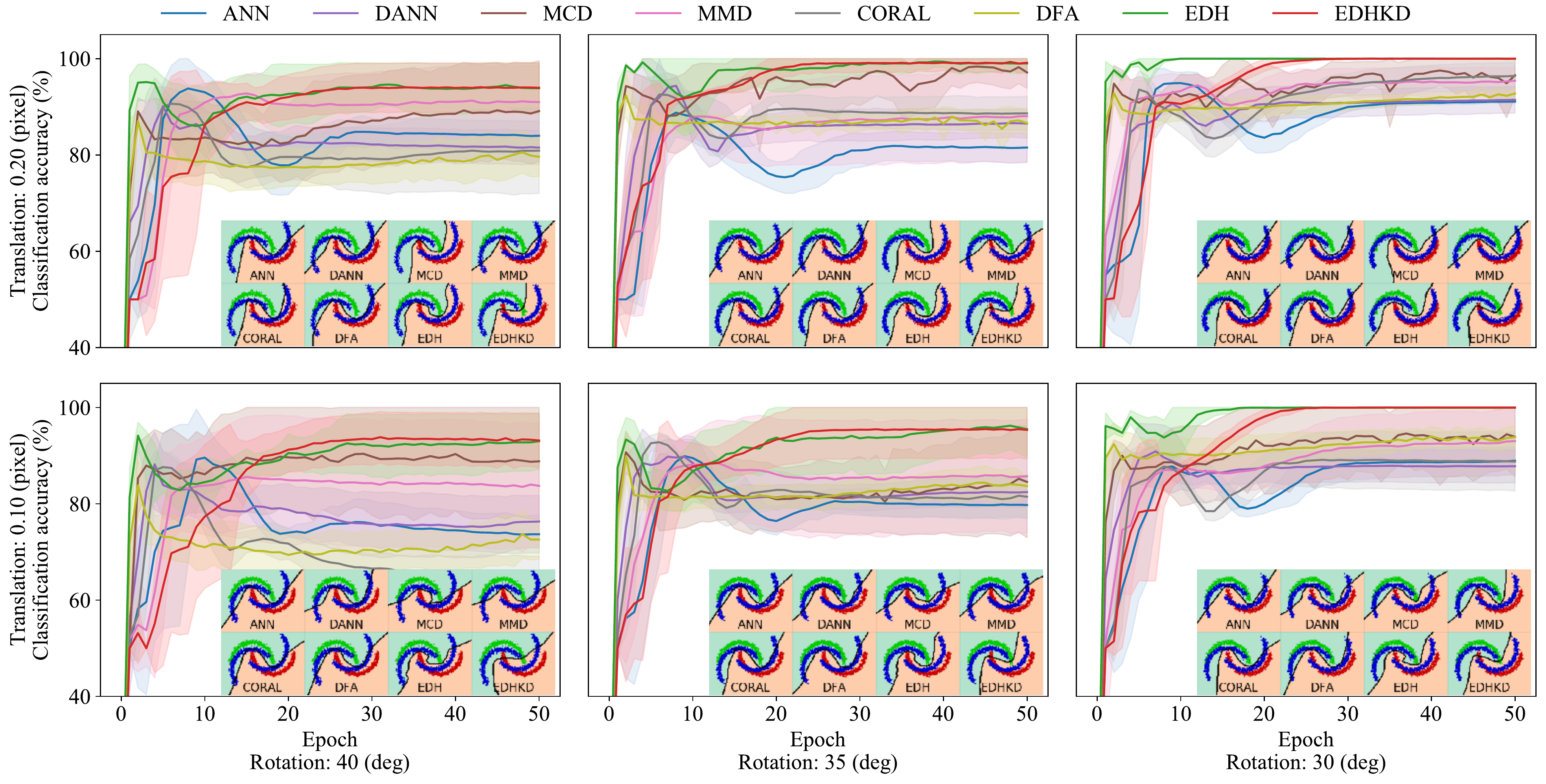}
    \caption{Decision boundary and learning curves of different methods for classifying target test set of the moon dataset in the training process. Green and red points represent the source data of two different classes. Blue points are the target data that should belong to two categories but are not labeled. The line and the shaded region represent the mean and the standard deviation, respectively, of the average classification accuracy in five trials. 
    }
    \label{fig:learning_process_moon}
\end{figure*}

The learning curves of the proposed EDHKD are higher than those of the other methods. All networks enter a plateau after training about 30 epochs. The plateau of the proposed EDHKD is higher and more stable than in MCD. The possible reason may be that the diverse generators and classifiers of the EDH decrease the variance of the classification accuracy, and then the EDHKD can learn the knowledge from the accurate pseudo labels of the EDH. Before entering the plateau, all networks except EDHKD meet a peak accuracy but the accuracy decreases after further training. This phenomenon reflects the stability of the EDHKD and the importance of the validation set, which helps to determine the early-stop time. Without the validation set, the performance of the network after training fixed-number epochs may be much worse than the peak performance. The proposed EDHKD mitigates this problem and achieves a higher steadiness, as shown in \autoref{fig:learning_process_moon}.

\subsection{Evaluation on ENABL3S}
On ENABL3S \citep{hu_benchmark_2018}, the proposed EDH was able to predict the locomotion intent of target subjects more accurately (mean over 95\%) and stably (standard deviation (std) less than 2\%) than all other methods (see \autoref{fig:NW_learning_process} and \autoref{tab:classification_nw}). However, the forward time of EDH was as long as 6.5 ms, which may cause the time delay of recognition results.
After knowledge distillation, the EDHKD could decrease the forward time to 1 ms. The classification accuracy of the EDHKD decreased to 94.4\% but the decrease was not significant ($P=0.4$). Compared to the BM method \citep{zhang_unsupervised_2020}, MCD utilizes the same loss functions but different network architecture and achieves 0.9\% higher accuracy. 
Although the improvement is not significant ($P=0.4$), it shows the advantage of utilizing a suitable network architecture for classifying human locomotion intent. Using the same generator and classifier architecture, the proposed EDH further improves the classification accuracy, which is 1.1\% higher than with MCD but the difference is not significant ($P=0.17$), either. Moreover, the accuracy of EDH is 2.0\% higher than that of the BM method and the improvement is significant ($P\approx0.05$). Although EDHKD only utilizes one feature generator and one classifier, it still achieves 0.5\% and 1.3\% higher classification accuracy than MCD and the BM method, but the difference is not significant ($P>0.19$). Therefore, both optimizing the network architecture and combining diverse hypotheses contribute to the improvement of the accuracy of predicting the locomotion intent of target subjects. The knowledge distillation could increase the efficiency of the network and simultaneously remain the high classification accuracy, which is beneficial for real-time recognition.

The proposed EDH and EDHKD perform significantly ($\geq 6.1\%, P\leq0.003$) better than non-adapted methods, like LDA, SVM, ANN, and CNN (see \autoref{tab:classification_nw}) in predicting the locomotion intent of the target subjects, which indicates that the proposed method can successfully transfer the learned knowledge from the source domain to the target domain. The target classification accuracy using EDHKD and MCD are significantly ($\geq 5.4\%, P\leq0.003$) higher than when using DANN, which indicates that the domain classifier cannot align the features of each class in two domains and may not work well if the domain biases are not uniform in different classes. Moreover, the EDHKD and MCD also achieve higher classification accuracy ($\geq 1.2\%, P\leq0.18$) than the non-adversarial domain adaptation method, e.g., MMD and CORAL, which shows that aligning the features in each class is more accurate than aligning features globally.

The source classification accuracy is always higher than the target accuracy, which is reasonable because the source data are labeled. The source classification accuracy using the deep neural networks (e.g., CNN, DANN, MCD, and EDH) are significantly higher ($\geq 1.4\%, P\leq0.01$) than that using shallow classifiers (LDA, SVM, and ANN), which validates the feasibility of the designed convolutional neural network designed in the present paper. Because the size of the proposed convolutional neural network has been decreased and the convolutional neural network can utilize GPU to compute in parallel, the forward time of CNN, DANN, MCD, MMD, CORAL, DFA, and EDHKD are similar to that of shallow classifiers. 

\begin{figure*}[htpb]
    \centering
    \includegraphics[width=\linewidth]{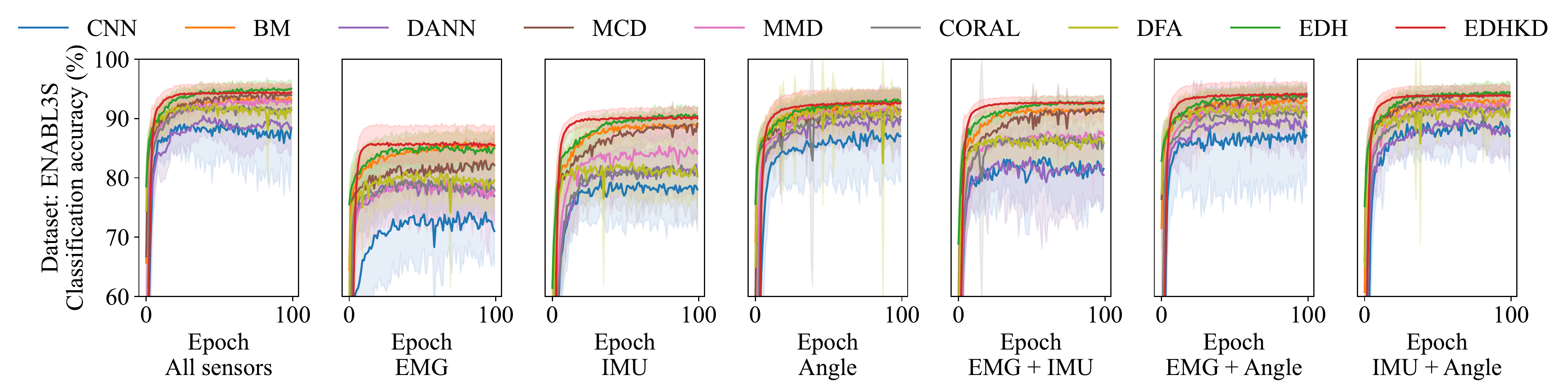}
    \caption{Learning curves of different methods for classifying target test set of the ENABL3S dataset in the training process. The line and the shaded region represent the mean and the standard deviation, respectively, of the average classification accuracy of 10 different leave-one-subject-out experiments. 
    }
    \label{fig:NW_learning_process} 
\end{figure*}

Different sensors affect the accuracy of predicting the locomotion intent of target subjects. As shown in \autoref{fig:NW_learning_process}, the classification accuracy of using all sensors is higher than when using part of the sensors. Although EMG signals seem the least accurate to classify locomotion modes, they can still improve the classification accuracy after being fused with other sensors (e.g., IMU and angle sensors). It is worth noting that the proposed EDHKD performs better than other methods for all different combinations of sensors, which further validates the performance of the proposed EDHKD. 

The best accuracy is not ensured without using the validation set to terminate the training early, but the overfitting problem seems negligible on ENABL3S. The proposed EDH tends to converge after training 50 epochs and its variance is small after converging. Without ensemble diverse hypotheses, the other methods are less stable during training and there is a risk to achieve a worse accuracy after training a fixed number of epochs.

\begin{table}[htbp]
\centering
\caption {\label{tab:classification_nw} Accuracy and forward time of classifying the locomotion modes for the source and target subject of ENABL3S and DSADS using different methods. Std denotes the standard deviation. \textcolor{black}{The forward time indicates the online executing time of classifying each signal segment.}}
\begin{center}
\resizebox{\textwidth}{!}{%
\begin{tabular}{l c c c c c c c c c c}
\toprule
Dataset & \multicolumn{5}{c}{ENABL3S} & \multicolumn{5}{c}{DSADS}\\
\midrule
Domain & \multicolumn{2}{c}{Source} & \multicolumn{2}{c}{Target} & & \multicolumn{2}{c}{Source} & \multicolumn{2}{c}{Target} &\\
\midrule
Methods & Mean(\%) & Std(\%) & Mean(\%) & Std(\%) & \textcolor{black}{Time(ms)} & Mean(\%) & Std(\%) & Mean(\%) & Std(\%) & \textcolor{black}{Time(ms)}\\
\midrule
LDA       &        92.5	&\textbf{0.2} &        85.5	&       3.9 &\textbf{0.2}   &        98.2	&\textbf{0.1} &        92.4	&\textbf{3.3}&\textbf{0.2}\\
SVM       &        89.5	&        1.3  &        78.4	&        6.8        & \textbf{0.2}   &        96.7	&        1.0  &        81.6	&        6.6 &        \textbf{0.2} \\
ANN       &        93.6	&        0.4  &        87.4	&        3.4        & 0.3   &        98.0	&        0.5  &        83.9	&        9.5 &0.3\\
CNN       &\textbf{96.7}	&        0.3  &    88.3	&        5.4        & 1.0   &\textbf{99.3}	&        0.4  &        90.0	&        6.1 &        1.0\\
BM \citep{zhang_unsupervised_2020} &94.5	&2.3  &93.1	&2.6            & 4.0   &99.2	&        0.3  &        90.3	&        3.7 &        3.5\\
DANN \citep{ganin_domain-adversarial_2016}  &96.2	&0.3  &88.5	&4.8    & 1.0   &        99.2	&        0.2  &        91.1	&        5.2 &        1.0\\
MCD \citep{saito_maximum_2018}      &95.0	&1.2  &93.9	&1.8            & 1.0   &        98.8	&        0.5  &        95.3	&        4.5 &        1.0\\
MMD \citep{long_learning_2017}        &96.1	&0.6	&92.7	&2.2   &1.0     & \textbf{99.3}	&0.2	&95.4	&3.3	 &1.0\\
CORAL \citep{sun_deep_2016}           &96.3	&0.4	&91.4	&3.4   &1.0     & \textbf{99.3}	&0.2	&91.7	&4.3	 &1.0\\
DFA \citep{wang_discriminative_2021}  &94.8	&1.7	&91.8	&3.0   &1.0     & 98.8	&0.7	&92.5	&3.3     &1.0\\
\textbf{EDH (Ours)}       &96.0	&        0.7  &\textbf{95.1}  &\textbf{1.7}     & 6.5   &99.1	&0.4  &\textbf{97.5}  &        4.7 &        5.2\\
\textbf{EDHKD (Ours)}     &$\backslash$ 	&$\backslash$ & 94.4  &\textbf{1.7}     & 1.0   &     $\backslash$ 	&  $\backslash$       & 97.4  & 4.9 &        1.0\\
\end{tabular}
}
\end{center}
\end{table}

\subsection{Evaluation using DSADS}
The performances of the unsupervised domain adaptation method was also evaluated using DSADS, to classify more human activities, including running, riding bicycles, and playing basketball. As shown in \autoref{fig:UCI_learning_process} and \autoref{tab:classification_nw}, the proposed EDHKD outperforms the other methods with a high (mean = 97.4\%) and consistent target classification accuracy (std = 4.9\%), which is significantly higher ($\uparrow 7.1\%, P=0.012$) than the BM method adopted in the previous paper \citep{zhang_unsupervised_2020}. 

It is worth noting that the MCD, which utilizes the same loss functions as the BM method, also achieves 5\% higher target classification accuracy than the BM method and the improvement is significant ($P=0.04$). This result shows that optimizing the network architecture is necessary to avoid overfitting when the dataset is small. As shown in \autoref{fig:UCI_learning_process}, the BM method first achieves the highest accuracy after training about 20 epochs and then starts to overfit the training set. After training 100 epochs, the final accuracy for the BM method is much lower than the peak accuracy. After decreasing the size of the neural networks, both MCD and EDHKD become more stable during training and enter a plateau after training 50 epochs. However, the learning curves of MCD are still less stable than that of EDHKD, which reveals that EDHKD increases the steadiness of the classification.  

\begin{figure*}[htpb]
    \centering
    \includegraphics[width=\linewidth]{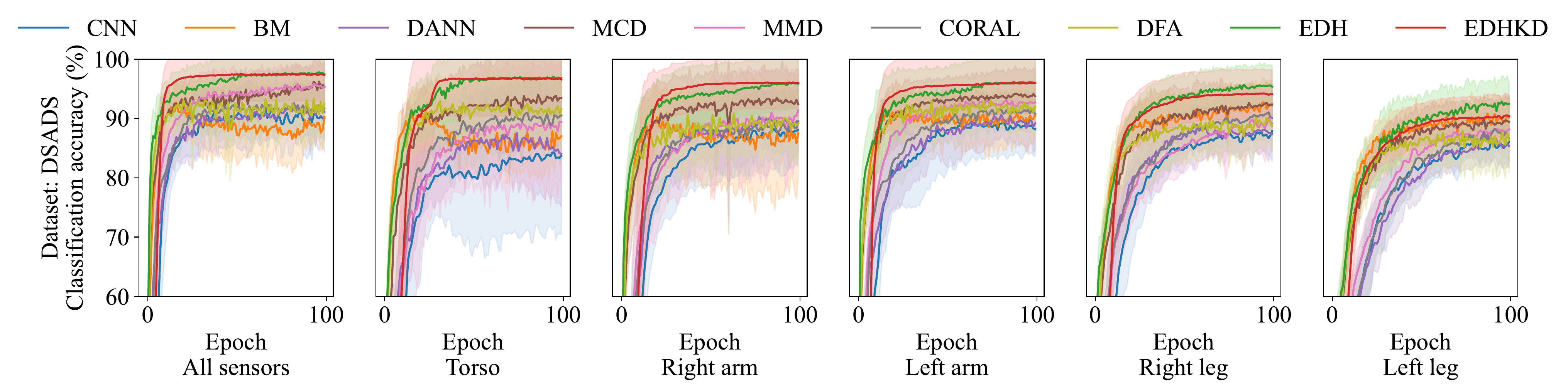}
    \caption{Learning curves of different methods for classifying target test set of DSADS dataset in the training process. The line and the shaded region represent the mean and the standard deviation, respectively, of the average classification accuracy of 8 different leave-one-subject-out experiments. 
    }
    \label{fig:UCI_learning_process} 
\end{figure*}

For DSADS, the sensor position does not severely affect the target classification accuracy. Five IMUs were worn on the different body parts of the subjects, including torso (T), right arm (RA), left arm (LA), right leg (RL), and left leg (LL). The proposed EDHKD using the single sensor performs works as well as using all sensors except using the IMU on the left leg. Whichever sensor is utilized, EDHKD outperforms all other methods. 

A surprising result is that LDA can still achieve 92.4\% accuracy for classifying the locomotion modes of target subjects, which is even higher than using CNN but the difference is not significant ($P=0.25$). LDA is an analytical method and thus is very efficient for real-time computing (forward time = 0.2 ms). Sometimes both target label and target data are not available and only source data are available to train the classifier. In these situations, LDA seems still a simple but powerful method to find the decision boundary in a low manifold space and ensure an acceptable generalization ability. If the target data are available, using EDHKD is better because it achieves a significantly higher ($\uparrow5.0\%, P = 0.01$) accuracy than with LDA but only costs 1.0 ms, which is still efficient.

\subsection{Learned diverse features}
To further validate whether the proposed EDH learns diverse features, the present paper visualizes t-SNE \citep{van_der_maaten_visualizing_2008} projection of the non-adapted features and the hidden features adapted by different generators. As shown in \autoref{fig:tsne}, all generators can improve the alignment between the source features and the target features. Without adaptation, the source features distribute differently from the target features. After adaptation, most source and target features belonging to the same class are located in the same region. Moreover, different generators transform the input feature into different feature spaces and the projected feature distributions for different generators are different from each other.

\begin{figure*}[h!]
    \centering
    \includegraphics[width=\linewidth]{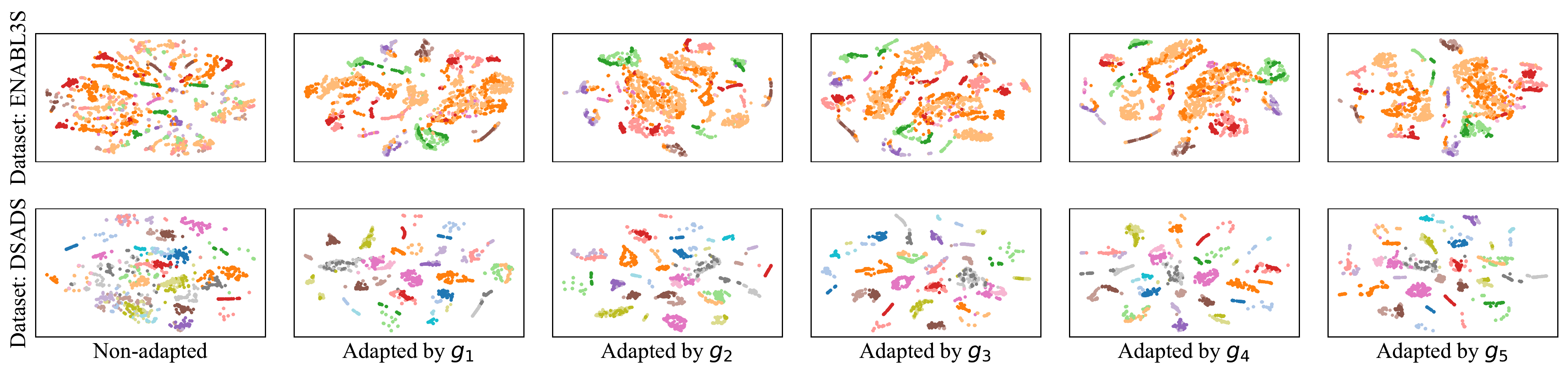}
    \caption{The visualization of t-SNE projection of non-adapted features and the adapted features extracted from five feature generators of EDH. The light-color points and the dark-color points represent the source and the target data respectively. The points of the same color belong to the same class. All features are extracted from the training set. After adaptation, features of the same class concentrate on a similar region, and the clearance between different clusters are larger than that before adaptation.}
    \label{fig:tsne} 
\end{figure*}
\section{Discussion}
\label{sec:Cross-subjectDiscussion}
The present paper proposed the ensemble diverse hypotheses and knowledge distillation (EDHKD) to classify the locomotion intent and the modes of target subjects without labeling their data. Compared to the previous research \citep{zhang_unsupervised_2020}, the present paper aimed to address the issue that the single generator may only learn a subset of the features and the large-size network may overfit the source data if there is no validation set of the target data. The proposed EDHKD realized the mentioned objectives by combining diverse feature generators and classifiers, optimizing loss functions, and lightening the base learner. The proposed EDHKD was evaluated using three different datasets and compared with different methods, including shallow classifiers, deep classifiers, and also the BM method used in the previous research \citep{zhang_unsupervised_2020}. Experimental results indicated that EDHKD achieved 96.9\%, 94.4\%, and 97.4\% average classification accuracy on the target dataset of the moon, ENABL3S, and DSADS datasets, which were higher than all other methods. Specifically, EDHKD achieved 1.3\% and 7.1\% higher accuracy for classifying the target data of ENABL3S and DSADS than in the BM method. These results have validated the feasibility of the proposed EDHKD method. Moreover, the EDHKD remained efficient (forward time $leq 1$ ms) as it only included one feature generator and one classifier and distilled the knowledge from the teach network (EDH), which validated the hypothesis that the light student network could learn the knowledge from the pseudo labels of a large teach network.

There are a variety of underlying reasons for the achieved improvements, which are discussed in the following paragraphs.

First, using two intelligent agents is better than one. As shown in \autoref{fig:learning_process_moon}, \autoref{fig:NW_learning_process}, and \autoref{fig:UCI_learning_process}, the proposed EDH achieved 5.2\%, 1.1\%, and 2.2\% higher target classification accuracy than the MCD even if the network architecture of their feature generators and classifiers are the same. 
Minimizing the loss functions of MCD could decrease the upper bound of target error theoretically. However, MCD
only utilizes one feature generator and may only learn a subset of the hidden features. Besides, there is no guarantee to find the global minima of MCD loss functions by training the feature generators and classifiers through backpropagation. One reason is that the parameter number of a neural network is large and the loss function may not be convex, which may cause the neural network to be trapped in the local minima. Another reason is the limitation of multi-objective optimization. There are several items in equation \eqref{eq:sup_error}, including the source error and the $C\Delta C$ distance of the source domain and the target domain. These two objectives may conflict with each other. Optimizing multiple loss functions may achieve a balance between them rather than finding the global minima for each loss function. Therefore, the MCD may still make mistakes after training. EDH cannot fully solve this problem but can mitigate the problem. Each hypothesis may make a mistake but different hypotheses seldom make the same mistake at the same time. Voting the results of all hypotheses may further reduce the classification error. Besides, the diverse-feature loss drives the different base learners to learn different parameters, which avoids causing the network to be trapped in the same local minima.

There are about $1.6\times10^6$ and $4.2\times10^4$ trainable parameters in the feature generator and the classifier of the EDHKD. The classifier of the BM method has $3.4\times10^4$ trainable parameters, which is comparable to that of EDHKD. However, the feature generator of the BM method has $1.3\times10^7$ trainable parameters, which is much larger than that of EDHKD. The higher parameter number usually indicates a higher probability of overfitting because of a higher fitting ability. The overfitting did happen while training the BM method with the DSADS dataset. As shown in \autoref{fig:UCI_learning_process}, the BM method achieved the peak accuracy after training only 20 epochs, and then it started to overfit. Because there was no validation set to determine the early stop time, the training epoch number was fixed. The final classification accuracy of the BM method was much lower than its peak accuracy. After lightening the network, the learning curves of the MCD, EDH, and EDHKD became more stable (see \autoref{fig:UCI_learning_process}). Because the loss functions of MCD and the BM method are the same, the only reason for the more stable learning curves is the reduction of the size of the feature generator. It should be noted that the BM method seems not to overfit on the ENABL3S dataset, which indicates that either increasing the dataset size or decreasing the neural network size could stabilize the learning curve. In reality, a large data set may be hard to obtain, and selecting a suitable network size is still important.

It is hard to speculate the category of the target data that are not labeled. Therefore, some inner properties of the data should be utilized to speculate their categories. Clustering is a common technique for unsupervised learning, and it aims to find data clusters where the data in the same cluster are more similar to each other than to those in other clusters. It is intuitive to make use of the cluster property for unsupervised cross-subject adaptation. The extracted features validate the cluster assumption. As shown in \autoref{fig:tsne}, the points of the same class (same color) are closer to each other than to those of different classes (different colors). There are distinct clearances between points of different classes. Therefore, it is reasonable to utilize an entropy loss function to promote the decision boundary to move away from the high-density areas. The points of different classes may mix in the initial training phase. In this case, the loss entropy function may have a negative effect. However, there are multiple loss functions for the feature generators of the proposed EDH method, including minimizing the source classification errors and minimizing the classifier discrepancy. The weight of entropy loss is smaller than the above two loss functions. Therefore, the above two loss functions will dominate the training of the neural network in the beginning phase. 
Once the classifier discrepancy and the source errors are minimized, the features will be transformed into different clusters. As shown in \autoref{fig:tsne}, the clearances between different feature clusters increase after adaptation. After transforming the features into different clusters, the entropy loss will play a more important role in training and will further decrease the target classification error.

There are some limitations in the present study. First, the proposed EDH did not directly recognize source signals. To decrease the dimension of the input data and increase the efficiency of the algorithm, some shallow features, such as the mean and the standard deviation, were first extracted from the source signals. These features did reduce the dimension of data required to process, but they may lose some important information. For unsupervised cross-subject adaptation, the deep convolutional neural network may be also helpful to extract more user-independent features. Second, unsupervised cross-subject adaptation does not require labeling the target data, but it still requires capturing the target data. In some situations, neither target label nor target data is available, which makes the situation much worse. In these situations, what we can do is training a stable classifier in the source domain. As shown in \autoref{fig:NW_learning_process}, \autoref{fig:UCI_learning_process}, and \autoref{tab:classification_nw}, the CNN is only trained using the source data but still achieved 88.3\% and 90.0\% classification accuracy on target data. However, the generalization ability of CNN is still unsatisfactory, and its performance on the target data is much lower and less stable than with unsupervised domain adaptation methods (e.g., EDHKD and MCD). This limitation is caused by the bias between the source domain and the target domain, and we can do little if neither the target data nor the target label is available. For this extreme situation, it may be better to capture a few samples of target data and design data augmentation methods, few-shot learning methods, and semi-supervised learning to further improve the classification accuracy on the target domain.


\section{Conclusions}
\label{sec:Cross-subjectConclusion}
Although supervised learning can accurately classify the locomotion intent, it requires collecting and labeling a large number of human signals to well fit the pattern of each new subject, which is often infeasible for both subjects and researchers. The present paper proposed an unsupervised cross-subject adaptation method called ensemble diverse hypotheses and knowledge distillation (EDHKD), which did not require the label of the target data. After training the EDHKD with the labeled source data and unlabeled target data, it could accurately classify the locomotion intent and the modes of the target subjects. Compared to previous methods, the proposed EDHKD could learn diverse features and addressed the overfitting issue by training diverse feature generators and classifiers, lightening the base learner, and incorporating the cluster property of the target data. The performance of the EDHKD was theoretically demonstrated and experimentally evaluated using a 2D moon dataset and two public datasets of human locomotion. Experimental results showed that the EDH could achieve 96.9\%, 94.4\%, and 97.4\% average accuracy for classifying target data of the moon dataset, ENABL3S, and DSADS, respectively. Compared to the BM method utilized in the previous work, the proposed EDH increased 1.3\% and 7.1\% accuracy, stabilized its learning curves in the training process, and decrease the forward time to 1 ms. These results validated the effectiveness of the proposed EDHKD for unsupervised cross-subject adaptation. The proposed EDHKD could assist wearable robots to predict the locomotion intent of the target subjects without labeling the data, which will increase the intelligence of the wearable robot and improve the human-robot interaction.

\section{Acknowledgement}
This work was supported by the National Key R\&D Program of China [Grant 2018YFB1305400]; National Natural Science Foundation of China [Grant U1913205, 62103180, and 51805237]; Guangdong Basic and Applied Basic Research Foundation [Grant 2020B1515120098]; the Science, Technology, and Innovation Commission of Shenzhen Municipality [Grant SGLH2018-0619172011638 and ZDSYS-20200811143601004]; the China Postdoctoral Science Foundation
(2021M701577); and Centers for Mechanical Engineering Research and Education at MIT and SUSTech.
\bibliographystyle{elsarticle-num}
\bibliography{references}
\end{document}


\renewcommand*{\figureautorefname}{Fig.}
\captionsetup[figure]{name={Fig.}}

\section{Thought experiment}
As introduced in \citep{allen-zhu_towards_2021}, the real data are usually "multi-view", which indicates that the data may contains multiple hidden features that can be utilized to recognize. \autoref{theorem:single_learner} displays that the single learner may only learn a subset of the hidden features, which degrades the generalization ability of the learner. 
\begin{theorem}
\label{theorem:single_learner}
(Single learner)
Let $g$, $c_1$, and $c_2$ be a feature generator and two linear binary classifiers. Let $\bm{x}_s$ and $\bm{x}_t$ be the source data and target data, respectively. The label of $\bm{x}_s$ and $\bm{x}_t$ are $y_s \in {0, 1}$ and $y_t \in \{0, 1\}$. 
Assume that three orthogonal hidden features $\bm{v}_1$, $\bm{v}_2$, $\bm{v}_3$, $\parallel\bm{v}_1\parallel = \parallel\bm{v}_2\parallel = \parallel\bm{v}_3\parallel = 1$ exist in the all source data $\bm{x}_s$ and the linear combinations of the hidden features $k_1\bm{v}_1 + k_2\bm{v}_2 + k_3\bm{v}_3, \{k_i \geq 0 |i \in \{ 1, 2, 3\}$ can be extracted from the feature generator $g$ through supervised learning. The activation function of classifier are set as ReLU and thus $c(g(\bm{x})) \geq 0$. The prediction $\hat{y}=1$ if $c(g(\bm{x})) > 0$, else $\hat{y}=0$.
Because the source domain and the target domain are different, only a part of target data (ratio = $\alpha$, $\alpha < 1$) contains all three hidden features $\bm{v}_1$, $\bm{v}_2$, and $\bm{v}_3$. Assume $\frac{1-\alpha}{3}$ target data only contains one hidden feature $\bm{v}_1$, $\bm{v}_2$, or $\bm{v}_3$, respectively. After training $c_1$ and $c_2$ to maximize their discrepancy and training $g$ to minimize the classifier discrepancy in the target domain, there is a probability that $g$ can only learn a subset of features which means $\exists k \in \{k_1, k_2, k_3\}, k = 0$.
\end{theorem}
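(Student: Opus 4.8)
\emph{Proof strategy.} The plan is to read the MCD procedure as an alternating optimisation over the classifier weights and over the non-negative extraction coefficients $(k_1,k_2,k_3)$, to exhibit a configuration with one coefficient equal to zero that is left invariant by both the discrepancy-maximisation step and the discrepancy-minimisation step, and then to argue that this configuration is reached from a set of initialisations of positive probability. Work in the subspace $V=\mathrm{span}(\bm v_1,\bm v_2,\bm v_3)$: write $\bm w_1,\bm w_2\in V$ for the weight vectors feeding the ReLU of $c_1,c_2$ and set $w_j^{(i)}=\langle\bm w_j,\bm v_i\rangle$. Since $g$ maps a target point carrying only $\bm v_i$ to $k_i\bm v_i$ and a target point carrying all three features to $\sum_i k_i\bm v_i$, the target discrepancy --- first maximised over $(\bm w_1,\bm w_2)$, then minimised over $(k_1,k_2,k_3)$, under the standing MCD constraint that $c_1,c_2$ keep classifying the source representations correctly --- is
\begin{equation}
D \;=\; \alpha\,\Big|\mathrm{ReLU}\big(\textstyle\sum_i k_i w_1^{(i)}\big)-\mathrm{ReLU}\big(\textstyle\sum_i k_i w_2^{(i)}\big)\Big| \;+\; \frac{1-\alpha}{3}\sum_{i=1}^{3}\big|\mathrm{ReLU}(k_i w_1^{(i)})-\mathrm{ReLU}(k_i w_2^{(i)})\big| .
\end{equation}

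The next step is to construct the ``dropped-feature'' configuration: $k_3=0$, $k_1,k_2>0$, and $\bm w_1,\bm w_2$ with $w_1^{(3)}=w_2^{(3)}=0$, chosen so that both classifiers remain source-accurate --- possible because, under the multi-view structure, the source task is already solvable from a strict subset such as $\{\bm v_1,\bm v_2\}$ --- while still disagreeing on the $\bm v_1$-only and $\bm v_2$-only target points (for instance $w_1^{(1)}>0\geq w_2^{(1)}$ and $w_2^{(2)}>0\geq w_1^{(2)}$, with the signs on the shared directions arranged so that $\sum_i k_i w_j^{(i)}>0$ on the positive source class), so that the maximisation step is genuinely non-trivial. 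I would then verify that this point is fixed by the alternating updates. Maximisation step: with $k_3=0$ every target representation lies in $\mathrm{span}(\bm v_1,\bm v_2)$, so neither $D$ nor the source loss depends on $w_1^{(3)},w_2^{(3)}$, whence the ascent leaves $w_j^{(3)}=0$ unchanged. Minimisation step: with $w_1^{(3)}=w_2^{(3)}=0$ the classifiers ignore the $\bm v_3$-direction, so neither $D$ nor the source loss depends on $k_3$, whence the descent leaves $k_3=0$ unchanged, while the source loss keeps $k_1,k_2$ bounded away from $0$ so that $g$ does not collapse. Hence the trained $g$ satisfies $k_3=0$, i.e.\ $\exists k\in\{k_1,k_2,k_3\}$ with $k=0$.

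To turn this into a statement about \emph{positive} probability, observe that the supervised warm-start of MCD is, by the single-learner effect of the multi-view setting, positively likely to reach a state in which $g$ has extracted essentially only $\bm v_1$ and $\bm v_2$ and $c_1,c_2$ carry negligible $\bm v_3$-weight; from any such state the $\bm v_3$-only target points exhibit almost no classifier disagreement (both ReLUs output $\approx 0$), so $D$ supplies essentially no gradient pushing $k_3$ up, and the gradients computed above keep driving $k_3$ back towards $0$. Composing ``the warm-start lands in a two-feature state'' (positive probability) with ``the adversarial phase does not repair it'' (the fixed-point analysis above) yields the claim.

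The step I expect to be the main obstacle is making this fixed-point/basin argument rigorous against the genuinely iterative nature of MCD. One must show that the zero-gradient directions above are invariant under a full max--then--min cycle rather than a single half-step, and in particular control the escape route in which, during a maximisation step with $k_3$ still slightly positive, the classifiers have an incentive of order $k_3$ to grow $w_j^{(3)}$ so as to exploit the $\bm v_3$-only target points: one needs $k_3\to0$ to win this race within the basin. Carrying the subdifferential bookkeeping through the ReLU kinks --- the sign changes of $\sum_i k_i w_j^{(i)}$ and of $k_i w_j^{(i)}$ --- while the source-accuracy feasible set itself moves with $(k_1,k_2)$ is the delicate part.
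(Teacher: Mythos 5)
There is a genuine gap, and it sits exactly where you flagged it: your positive-probability step assumes what the theorem asks you to prove. You argue that the supervised warm-start is ``positively likely'' to leave $g$ with essentially only $\bm{v}_1,\bm{v}_2$ and both classifiers with negligible $\bm{v}_3$-weight, and then show that the adversarial phase preserves this configuration. But the theorem's hypotheses only give $k_i\geq 0$ after supervised learning; the feature-dropping is supposed to be \emph{produced} by the discrepancy max/min training itself, not imported from the warm start, and appealing to ``the single-learner effect of the multi-view setting'' at that point is circular (or at best defers to an external result). Moreover, your invariance argument concerns a configuration in which both classifiers ignore $\bm{v}_3$, so there is no discrepancy force acting on $k_3$ at all, and you yourself admit you cannot yet control the escape route in which, with $k_3$ still positive, the classifiers gain order-$k_3$ discrepancy by growing $w_j^{(3)}$. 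That race is precisely where the content of the theorem lies, and your proposal leaves it open.

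The paper's proof avoids the fixed-point/basin machinery by analyzing the two MCD steps directly. Since the ReLU outputs are nonnegative, $|c_1(k_i\bm{v}_i)-c_2(k_i\bm{v}_i)|$ is maximal only when $c_1(k_i\bm{v}_i)\,c_2(k_i\bm{v}_i)=0$, so the maximization step forces each single-feature direction to be claimed by at most one classifier. In the representative case $c_1(\bm{v}_1)=1$, $c_1(\bm{v}_2)=c_1(\bm{v}_3)=0$, $c_2(\bm{v}_1)=0$, $c_2(\bm{v}_2)=c_2(\bm{v}_3)=1$, the source-accuracy constraints become $k_1>0$ and $k_2+k_3>0$, while the target discrepancy the generator then minimizes equals $\frac{1-\alpha}{3}(k_1+k_2+k_3)$; minimizing it subject to those constraints forces $k_2=0$ or $k_3=0$. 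In the other case, where each classifier claims only one feature, $k_3$ is a neutral direction (the situation your construction describes) and is zero with probability $1/2$. Either way the dropped feature follows from the structure of the max--min objective itself, with the randomness coming only from which features the classifiers claim, so the claimed probability is actually established. To salvage your route you would have to prove the warm-start claim within the theorem's assumptions and rigorously win the $k_3$-versus-$w_j^{(3)}$ race through the ReLU kinks; the paper's direct sequential argument is both simpler and complete for the statement at hand.
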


\begin{proof}
Because $c_1$ and $c_2$ are trained to classify source data correctly:
\begin{equation}
\label{eq:supervised_constraints}
\begin{split}
&c_1(g(\bm{x}))=c_1(k_1\bm{v}_1 + k_2\bm{v}_2 + k_3\bm{v}_3) > 0\\
&c_2(g(\bm{x}))=c_2(k_1\bm{v}_1 + k_2\bm{v}_2 + k_3\bm{v}_3) > 0
\end{split}
\end{equation}

Then the classifier discrepancy in the target domain is:
\begin{equation}
\label{eq:classifer_discrepancy}
\begin{split}
\epsilon_{T_g}(c_1, c_2) &= \bm{E}_{g(\bm{x}) \sim T_g}\big[|c_1(g(\bm{x})) - c_2(g(\bm{x}))|\big]\\
&= \alpha |c_1(k_1\bm{v}_1 + k_2\bm{v}_2 + k_3\bm{v}_3) - c_2(k_1\bm{v}_1 + k_2\bm{v}_2 + k_3\bm{v}_3)| \\
&+ \frac{1-\alpha}{3}\big[|c_1(k_1\bm{v}_1)-c_2(k_1\bm{v}_1)| + |c_1(k_2\bm{v}_2)-c_2(k_2\bm{v}_2)| \\
&+ |c_1(k_3\bm{v}_3)-c_2(k_3\bm{v}_3)|\big]\\
&= 0 + \frac{1-\alpha}{3}\big[|c_1(k_1\bm{v}_1)-c_2(k_1\bm{v}_1)| + |c_1(k_2\bm{v}_2)-c_2(k_2\bm{v}_2)| \\
&+ |c_1(k_3\bm{v}_3)-c_2(k_3\bm{v}_3)|\big] \\
&\leq \frac{1-\alpha}{3} \big[|c_1(k_1\bm{v}_1)| +|c_2(k_1\bm{v}_1)| + |c_1(k_2\bm{v}_2)|+|c_2(k_2\bm{v}_2)| \\
&+ |c_1(k_3\bm{v}_3)|+|c_2(k_3\bm{v}_3)|\big] \big]
\end{split}
\end{equation}
where $g(\bm{x}_t) = k_i \bm{v}_i, i\in\{1, 2, 3\}$ if $\bm{x}_t$ only contains one feature $\bm{v}_i$. The maximum value is achieved if and only if $\{c_1(k_i\bm{v}_i) c_2(k_i\bm{v}_i) \leq 0 | i\in\{1, 2, 3\}\}$.

To maximize the $\epsilon_{T_g}(c_1, c_2)$, we can get a result that:
\begin{equation}
\begin{split}
\exists c_j, c_j(v_i) = 
\begin{cases}
  >0 & \text{if $i=l, l \in \{1, 2, 3\}$} \\
  0  & \text{otherwise}
\end{cases}
\end{split}
\end{equation}
where $l$ is a random index.

If a classifier can recognize two features, without loss of generality, 
we can assume $c_1(\bm{v}_1) = 1, c_1(\bm{v}_2)= c_1(\bm{v}_3)= 0 $ while $c_2(\bm{v}_1) = 0, c_2(\bm{v}_2)= c_2(\bm{v}_3) = 1$.

Because $c_1$ and $c_2$ are linear:
\begin{equation}
\label{eq:k_constraint}
\begin{split}
c_1(g(\bm{x})) &= c_1(k_1\bm{v}_1 + k_2\bm{v}_2 + k_3\bm{v}_3) = k_1c_1(\bm{v}_1) + k_2c_1(\bm{v}_2 + k_3c_1(\bm{v}_3) \\ &= k_1 > 0 \\
c_2(g(\bm{x})) &= c_2(k_1\bm{v}_1 + k_2\bm{v}_2 + k_3\bm{v}_3) = k_1c_2(\bm{v}_1) + k_2c_2(\bm{v}_2 + k_3c_2(\bm{v}_3) \\ &= k_2 + k_3 > 0\\
\end{split}
\end{equation}

The last step of MCD is to train the feature generator to minimize the classifier discrepancy: 
\begin{equation}
\begin{split}
\min_{k_1, k_2, k_3} \epsilon_{T_g}(c_1, c_2) &= \min_{k_1, k_2, k_3} \frac{1-\alpha}{3}\big[|k_1c_1(\bm{v}_1)-k_1c_2(\bm{v}_1)| \\ &+ |k_2c_1(\bm{v}_2)-k_2c_2(\bm{v}_2)| + |k_3c_1(\bm{v}_3)-k_3c_2(\bm{v}_3)|\big] \\
&=\min_{k_1, k_2, k_3}\frac{1-\alpha}{3}\big[k_1 + k_2 + k_3\big]
\end{split}
\end{equation}

Because $k_1 + k_2 + k_3 \geq k_1 + k_2$ and $k_1 + k_2 + k_3 \geq k_1 + k_3$ as $\{k_i \geq 0 | i\in\{1, 2, 3\}\}$, the condition of minimizing the classifier discrepancy is that $\exists i \in \{2, 3\}, k_i = 0$. 

Since $\exists i \in \{2, 3\}, k_i = 0$, $c(g(\bm{x}_t)) = 0$ if $\bm{x}_t)$ only contains feature $\bm{v}_i$. Then the target accuracy is $1 - \frac{1-\alpha}{3}$.   

According to the symmetry, we can find that the training result of MCD for the single model is that g can only learn a subset of features and $\exists k \in \{k_1, k_2, k_3\}, k = 0$ if a classifier can recognize two features. In this case, the target accuracy is $1 - \frac{1-\alpha}{3}$.

If both two classifiers can only recognize one feature, without loss of generality, 
we can assume $c_1(\bm{v}_1) = 1, c_1(\bm{v}_2)= c_1(\bm{v}_3)= 0 $ while $c_2(\bm{v}_1) = c_2(\bm{v}_3) = 0, c_2(\bm{v}_2) = 1$.

In this case, minimizing the classifier discrepancy equals: 
\begin{equation}
\begin{split}
\min_{k_1, k_2, k_3} \epsilon_{T_g}(c_1, c_2) &= \min_{k_1, k_2, k_3} \frac{1-\alpha}{3}\big[|k_1c_1(\bm{v}_1)-k_1c_2(\bm{v}_1)| \\ &+ |k_2c_1(\bm{v}_2)-k_2c_2(\bm{v}_2)| + |k_3c_1(\bm{v}_3)-k_3c_2(\bm{v}_3)|\big] \\
&=\min_{k_1, k_2, k_3}\frac{1-\alpha}{3}\big[k_1 + k_2\big]
\end{split}
\end{equation}
where the $k_3$ does not affect the result and thus $k_3$ can be either 1 or 0, and the probability of each case is 50\%.

Then $c(g(\bm{x}_t)) = 0$ if $\bm{x}_t$ only contains feature $\bm{v}_3$. The target accuracy will be $1 - \frac{1-\alpha}{3}$.

According to the above two situations, there is a high probability ($p > 50\%$) that the feature generator only learns a subset of features. Moreover, the target accuracy in this theoretical experiment is $1 - \frac{1-\alpha}{3}$ for the single model.
\end{proof}

Since the single learners may only learn a subset of features, ensemble multiple learners may learn all features. In the \autoref{theorem:ensemble_learners}, we will demonstrate that maximizing the feature diversity allow feature generators to learn all features.

\begin{theorem}
\label{theorem:ensemble_learners}
(Ensemble learners) Let $g_1$, $g_2$, and $g_3$ be three feature generators. Each feature generator is able to learn the linear combination of three orthogonal hidden features $k_{i1}\bm{v}_1 + k_{i2}\bm{v}_2 + k_{i3}\bm{v}_3$, $\parallel\bm{v}_1\parallel = \parallel\bm{v}_2\parallel = \parallel\bm{v}_3\parallel = 1$. After maximizing the diversity of features: 
\begin{equation}
\begin{split}
L_{d,g} = \bm{E}_{\bm{x}}\Big[\sum_{i=1}^{3}\lvert g_i(\bm{x}) - \bm{E}_{i\in[1,3]} g_i(\bm{x})\lvert \Big].
\end{split}
\end{equation}
the ensemble of three learners will learn all hidden features:
\begin{equation}
\begin{split}
\exists i, k_{ij} > 0, i\in\{1, 2, 3\} \text{ for every } j \in \{1, 2, 3\}.
\end{split}
\end{equation}
\end{theorem}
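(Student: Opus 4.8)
The plan is to reduce the claim to an elementary fact: a maximiser of feature dispersion can never leave a feature "unused" by all three generators. Since $\bm v_1,\bm v_2,\bm v_3$ are orthonormal, I would first identify each generator with its coefficient vector $\bm k_i=(k_{i1},k_{i2},k_{i3})\in\mathbb R^3_{\ge0}$, so that on an input $\bm x$ carrying the features indexed by $S(\bm x)\subseteq\{1,2,3\}$ one has $g_i(\bm x)-\bm E_{i\in[1,3]}g_i(\bm x)=\sum_{j\in S(\bm x)}(k_{ij}-\bar k_j)\bm v_j$ with $\bar k_j=\tfrac13\sum_{i=1}^3k_{ij}$, and hence $\lvert g_i(\bm x)-\bm E_{i\in[1,3]}g_i(\bm x)\rvert$ equals a fixed norm of the deviation vector $(k_{ij}-\bar k_j)_{j\in S(\bm x)}$. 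Thus $L_{d,g}$ is a weighted sum, over the data support and over $i$, of norms of linear images of the coefficient matrix $K=(k_{ij})$. Taking the coefficients to range over the bounded feasible set inherited from the supervised stage (which I would normalise to $[0,1]^{3\times3}$), $L_{d,g}$ is continuous on a compact set, so a maximiser exists; moreover $L_{d,g}$ is convex in $K$ (a sum of norms of linear maps), so at least one maximiser is a vertex, i.e.\ a $0/1$ matrix.

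The key step is to show that at \emph{any} maximiser, every column of $K$ has a strictly positive entry. Suppose instead that $k_{1j}=k_{2j}=k_{3j}=0$ for some feature index $j$. Pick any input $\bm x$ with $j\in S(\bm x)$ (such $\bm x$ exists, otherwise $\bm v_j$ would be irrelevant; in the data model of \autoref{theorem:single_learner} the isolated-feature inputs already supply them). Increasing $k_{1j}$ by a small $\varepsilon>0$ stays feasible and keeps the supervised constraints satisfied, since under the ReLU convention $c_i(\bm v_l)\ge0$, so $c_i(g_i(\bm x_s))=\sum_l k_{il}c_i(\bm v_l)$ only grows. After this perturbation the $j$-th components of the three difference vectors become $\tfrac{2\varepsilon}{3},-\tfrac{\varepsilon}{3},-\tfrac{\varepsilon}{3}$, which strictly enlarges $\lvert g_i(\bm x)-\bm E_{i\in[1,3]}g_i(\bm x)\rvert$ for every $i$ and every $\bm x$ containing $\bm v_j$, while no other term of $L_{d,g}$ changes. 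Hence $L_{d,g}$ strictly increases, contradicting maximality. Therefore each column $j$ carries some $k_{ij}>0$, which is exactly the assertion that there exists $i$ with $k_{ij}>0$ for every $j\in\{1,2,3\}$.

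I would close by identifying the optimisers concretely to make the "ensemble learns everything" reading vivid: restricting to $0/1$ matrices and focusing on the isolated-feature contribution, which separates as $\sum_j w_j\sum_i\lvert k_{ij}-\bar k_j\rvert$ with $w_j>0$ (a positive multiple of the mean absolute deviation of column $j$), the maximum is reached exactly when every column is non-constant; in particular the permutation assignment $k_{ij}=\mathbf 1[j=\sigma(i)]$ is optimal, so the three generators partition the three features and the ensemble realises all of $\bm v_1,\bm v_2,\bm v_3$. I expect the only real obstacle to be bookkeeping rather than mathematics: making "maximise $L_{d,g}$" well posed by committing to the implicit boundedness/normalisation of the coefficients, and checking that the single perturbation used above respects the supervised constraints of \autoref{theorem:single_learner}; once those are pinned down, the argument is the one-line observation that an all-zero feature column is never a dispersion maximiser.
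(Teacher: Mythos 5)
Your proof is correct, but it reaches the conclusion by a genuinely different route than the paper. The paper makes the same initial identification (orthonormality plus the $L_1$ norm turn each term into a sum of coefficient deviations, giving $L_{d,g}=(r_s+r_t\frac{1+2\alpha}{3})\sum_{i,j}\lvert k_{ij}-\bm{E}_{i\in[1,3]}k_{ij}\rvert$), but then argues per column: sorting $k_{1j}\le k_{2j}\le k_{3j}$, it computes $\sum_i\lvert k_{ij}-\bar k_j\rvert = k_{3j}-k_{1j}+\lvert\frac{2}{3}(k_{2j}-\frac{k_{1j}+k_{3j}}{2})\rvert\le\frac{5}{3}(k_{3j}-k_{1j})$ and concludes that a maximizer must have $\min_i k_{ij}=0$ and $\max_i k_{ij}>0$ in every column, which simultaneously yields the stated claim and the extra structure (each feature is also dropped by some generator) that the paper later exploits for its target-accuracy discussion. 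You instead use a perturbation (exchange) argument: an all-zero column can be strictly improved by raising one entry by $\varepsilon$, since the $j$-components of the deviations become $\frac{2\varepsilon}{3},-\frac{\varepsilon}{3},-\frac{\varepsilon}{3}$ and every term involving an input carrying $\bm v_j$ strictly grows, contradicting maximality. Your route is more economical and more robust — it does not depend on the specific $L_1$ reduction or on the mixture weights $r_s,r_t,\alpha$, and it makes explicit the boundedness/normalization of the coefficients that the paper leaves implicit (without which ``maximize $L_{d,g}$'' is not well posed, and without which the paper's own conclusion $\max_i k_{ij}>0$ does not formally follow). What it does not give you is the sharper maximizer structure ($\min_i k_{ij}=0$, the $\frac{5}{3}$ bound, the explicit optimal column patterns) that the paper uses in its subsequent, beyond-the-statement argument that the ensemble attains $100\%$ target accuracy; since the theorem as stated only asks that every feature be picked up by some generator, your argument fully suffices for it. Your convexity/vertex remark and the identification of permutation assignments as optimizers are fine but inessential asides.
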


\begin{proof}
According to \autoref{theorem:single_learner}, the discrepancy of features:
\begin{equation}
\begin{split}
L_{d,g} &= r_s \bm{E}_{\bm{x}_s}\Big[\sum_{i=1}^{3}\lvert g_i(\bm{x}_s) - \bm{E}_{i\in[1,3]} g_i(\bm{x}_s)\lvert \Big] \\ 
&+ r_t \bm{E}_{\bm{x}_t}\Big[\sum_{i=1}^{3}\lvert g_i(\bm{x}_t) - \bm{E}_{i\in[1,3]} g_i(\bm{x}_t)\lvert \Big]\\
&= (r_s + r_t\alpha)\sum_{i=1}^{3}\lvert \sum_{j=1}^{3} (k_{ij}-\bm{E}_{i\in[1,3]}k_{ij})\bm{v}_j\lvert \\
&+ r_t\frac{1-\alpha}{3}\sum_{i=1}^{3} \sum_{j=1}^{3} \lvert(k_{ij}-\bm{E}_{i\in[1,3]}k_{ij})\bm{v}_j\lvert
\end{split}
\end{equation}
where $r_s$ and $r_t$ are the ratio of the sample number of source data and the target data to the total sample number, respectively.

For $L_1$ (Manhattan) distance:
\begin{equation}
\begin{split}
\lvert \sum_{i=1}^{3} k_i \bm{v}_i\lvert = \sum_{i=1}^{3} \big\lvert k_i \parallel \bm{v}_i\parallel \big\lvert=
\sum_{i=1}^{3} \lvert k_i\lvert.
\end{split}
\end{equation}

Hence:
\begin{equation}
\begin{split}
L_{d,g} &= (r_s + r_t\alpha)\sum_{i=1}^{3} \sum_{j=1}^{3} \lvert k_{ij}-\bm{E}_{i\in[1,3]}k_{ij}\lvert \\
&+ r_t\frac{1-\alpha}{3}\sum_{i=1}^{3} \sum_{j=1}^{3} \lvert k_{ij}-\bm{E}_{i\in[1,3]}k_{ij}\lvert
\\&= (r_s + r_t\frac{1+2\alpha}{3})\sum_{i=1}^{3} \sum_{j=1}^{3} \lvert k_{ij}-\bm{E}_{i\in[1,3]}k_{ij}\lvert
\end{split}
\end{equation}

Without loss of generality, we can assume $k_{1j} \leq k_{2j} \leq k_{3j}$. Then:
\begin{equation}
\begin{split}
\sum_{i=1}^{3} \lvert k_{ij}-\bm{E}_{i\in[1,3]}k_{ij}\lvert &= \bm{E}_{i\in[1,3]}k_{ij} - k_{1j} + k_{3j} - \bm{E}_{i\in[1,3]}k_{ij} + \lvert k_{2j} - \bm{E}_{i\in[1,3]}k_{ij} \lvert\\
&= k_{3j} - k_{1j} + \lvert k_{2j} - \frac{k_{1j} + k_{2j} + k_{3j}}{3}\lvert\\
&=k_{3j} - k_{1j} + \lvert \frac{2}{3}(k_{2j} - \frac{k_{1j}+k_{3j}}{2})\lvert \\
&\leq \frac{5}{3}(k_{3j} - k_{1j})
\end{split}
\end{equation}
where the maximum value is achieved when $k_{2j} = k_{3j}$ or $k_{2j} = k_{1j}$.

Therefore, maximizing the feature discrepancy will cause max$\{k_{ij} | i \in\{1, 2, 3\}\} > 0$ and min$\{k_{ij} | i \in\{1, 2, 3\}\} = 0$. For every $j \in \{1, 2, 3\}, \exists i = \argmax_{i \in\{1, 2, 3\}} k_{ij}, k_{ij} > 0$. Hence, all three hidden features will be learned. 

According to \eqref{eq:k_constraint}, each feature generator should learn at least two hidden features. Therefore, for each $\bm{x}_t$ that only contains one feature $\bm{v}_l$, $\exists i = \argmin_{i \in\{1, 2, 3\}} k_{ij}|j\neq l$, $g_i(\bm{x}_s) = k_{il}\bm{v}_l + k_{im} \bm{v}_m, m \neq j$. Because the classifiers are trained to maximize the classifier discrepancy, which is shown in \eqref{eq:classifer_discrepancy}, $c_{i1}(k_{im} \bm{v}_m) c_{i2}(k_{im} \bm{v}_m) \leq 0$. Therefore, $\exists n \in {1, 2}, c_{in}{k_{im} \bm{v}_m} = 0$. According to \eqref{eq:supervised_constraints}, the source data should be classified accurately: 
\begin{equation}
\begin{split}
&c_{in}(g_i(\bm{x}_s)) > 0 \\
&c_{in}(k_{il}\bm{v}_l + k_{im}\bm{v}_m) > 0 \\
&k_{il}c_{in}\bm{v}_l + 0 > 0 \\
&c_{in}\bm{v}_l > 0
\end{split}
\end{equation}

In conclusion, for each $\bm{x}_t$ which contains feature $\bm{v}_l$, $\exists i \in \{1, 2, 3\} \text{ and } n\in\{1, 2\}$, $c_{in}(g_i(\bm{x}_t)) = k_{il}c_{in}(\bm{v}_l) > 0$. Therefore, the target accuracy for the ensemble learners in this theoretical experiment is 100\%.
\end{proof}

\begin{theorem}
\label{theorem:knowledge_distillation}
(Knowledge distillation) Let $\{g_i, c_{in}| i \in \{ 1, 2, 3\}, n\in \{1, 2\}\}$, be the teacher learners mentioned \autoref{theorem:ensemble_learners}. A student learner with only one feature generator $g_s$ and one classifier $c_s$ can also classify all target accurately $g_s(\bm{x}) =  k_{s1}\bm{v}_1 + k_{s2}\bm{v}_2 + k_{s3} \bm{v}_3, \{k_{sj} > 0 | j\in \{ 1, 2, 3\}\}$ and $\{c_s(\bm{v}_j) > 0| j \in \{ 1, 2, 3\}\}$, by learning the soft labels of the teach network.
\end{theorem}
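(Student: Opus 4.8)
The plan is to show that the teacher ensemble's soft labels encode the three single-feature responses, and that fitting them forces the linear student to acquire every hidden feature with a positive classifier weight. First I would make the teacher's aggregated output explicit: write $T(\bm{x}) = \frac{1}{6}\sum_{i=1}^{3}\sum_{n=1}^{2} c_{in}(g_i(\bm{x}))$ for the soft label produced by the ensemble of \autoref{theorem:ensemble_learners} (any fixed convex combination of the $c_{in}(g_i(\bm{x}))$ behaves identically). Since every $c_{in}$ is linear, every $c_{in}(\bm{v}_j)\ge 0$, and all feature coefficients $k_{ij}\ge 0$, the ReLU in the teacher never clips, so $T$ is additive over the hidden features: on the source data $T(\bm{x}_s)=\sum_{j=1}^{3} T_j$ with $T_j:=\frac{1}{6}\sum_{i,n} k_{ij}\,c_{in}(\bm{v}_j)$, and on a target point that contains only $\bm{v}_l$ one has $T(\bm{x}_t)=T_l$. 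By the conclusion of \autoref{theorem:ensemble_learners}, for each $l\in\{1,2,3\}$ there is a pair $(i,n)$ with $k_{il}c_{in}(\bm{v}_l)>0$ while all remaining summands are $\ge 0$; hence $T_l>0$ for every $l$.

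Next I would argue that the distillation objective $L_{\mathrm{KD}}=\bm{E}_{\bm{x}}\big[\lvert c_s(g_s(\bm{x}))-T(\bm{x})\rvert\big]$, taken over the source/target mixture, is driven to zero by a student of the claimed form, so that at the optimum $c_s(g_s(\bm{x}))=T(\bm{x})$ on the data support. Realizability is where linearity pays off: choosing $c_s(\bm{v}_j)=1$ and $k_{sj}=T_j>0$ gives $c_s(g_s(\bm{x}_t))=k_{sl}c_s(\bm{v}_l)=T_l$ on each single-feature target point and $c_s(g_s(\bm{x}_s))=\sum_j k_{sj}c_s(\bm{v}_j)=\sum_j T_j=T(\bm{x}_s)$ on the source points, so all soft labels are matched exactly and $L_{\mathrm{KD}}=0$. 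Conversely, \emph{any} minimizer must satisfy $k_{sl}\,c_s(\bm{v}_l)=T_l>0$ for each $l\in\{1,2,3\}$; because $k_{sl}\ge 0$ and, by the ReLU, $c_s(\bm{v}_l)\ge 0$, a strictly positive product forces $k_{sl}>0$ and $c_s(\bm{v}_l)>0$ simultaneously. This is exactly the assertion $\{k_{sj}>0\mid j\in\{1,2,3\}\}$ together with $\{c_s(\bm{v}_j)>0\mid j\in\{1,2,3\}\}$.

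Finally I would close the accuracy claim: for an arbitrary target sample whose set of present features $S\subseteq\{1,2,3\}$ is nonempty, $c_s(g_s(\bm{x}_t))=\sum_{j\in S} k_{sj}c_s(\bm{v}_j)>0$, so $\hat y=1$, which is the correct label (every sample in this thought experiment carries the positive label, as in \autoref{theorem:single_learner}); hence the student attains $100\%$ target accuracy, matching the ensemble. The step I expect to require the most care is the realizability argument in the middle paragraph — that a single linear student can simultaneously match the teacher's source and target soft labels. This rests on the additivity identity $T(\bm{x}_s)=\sum_j T_j$, which in turn needs the teacher classifiers to be linear and the features orthogonal so that no ReLU clipping occurs; if the teacher output were instead a softmax/normalized score, additivity would hold only approximately and one would need a perturbation estimate. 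It is also worth remarking that the result does not conflict with \autoref{theorem:single_learner}: there the only supervision on the single learner was the hard source label, i.e. the single scalar inequality $c(g(\bm{x}_s))>0$, whereas distillation supplies the three extra equalities $c_s(g_s(\bm{x}_t))=T_l>0$ on the single-feature target points, and it is precisely this added information that removes the degeneracy.
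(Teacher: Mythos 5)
Your proposal is correct and follows essentially the same route as the paper's proof: by \autoref{theorem:ensemble_learners} the ensemble soft label on each single-feature target point is strictly positive, matching it forces $c_s(g_s(\bm{x}_t)) = k_{sj}c_s(\bm{v}_j) > 0$, and nonnegativity then yields $k_{sj} > 0$ and $c_s(\bm{v}_j) > 0$ for every $j$. Your extra realizability check (that one linear student can match the source and target soft labels simultaneously, so the distillation loss actually attains its minimum) is a detail the paper passes over silently, but it refines rather than changes the argument.
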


\begin{proof}
For $\bm{x}_t$ that only contains feature $\bm{v}_j$, the student learner is trained to by the soft label of the teacher learner. As proved in \autoref{theorem:ensemble_learners}, $\exists i \in \{1, 2, 3\} and n \in \{1, 2\}$, $c_{in}(g_i{\bm{x}_t}) > 0$. Then the soft label, which is average of ensemble learners, $\bar{y}(\bm{x}_t) > 0$. After minimizing the cross entropy between $\bar{y}(\bm{x}_t)$ and $c_s(g_s(\bm{x}_t))$, $c_s(g_s(\bm{x}_t)) > 0$.

Because $\bm{x}_t$ only contains feature $\bm{v}_j$,
\begin{equation}
\begin{split}
&c_s(g_s(\bm{x}_t)) = c_s(k_{sj}\bm{v}_j) = k_{sj}
c_s(\bm{v}_j) > 0 \\
&\rightarrow k_{sj} > 0 \text{ and } c_s(\bm{v}_j) > 0, j \in \{1, 2, 3\}.
\end{split}
\end{equation}
\end{proof}

\bibliographystyle{elsarticle-num}
\bibliography{references}